\newtheorem{theorem}{Theorem}[section]
\DeclarePairedDelimiter\ceil{\lceil}{\rceil}
\begin{document}



\title{SF-SGL:  \underline{S}olver-\underline{F}ree \underline{S}pectral \underline{G}raph \underline{L}earning from Linear Measurements
}
\author{\IEEEauthorblockN{Ying Zhang$^*$, Zhiqiang Zhao$^*$ and Zhuo Feng}\\
\IEEEauthorblockA{\textit{Department of Electrical and Computer Engineering} \\
\textit{Stevens Institute of Technology}\\
Hoboken, NJ, USA \\
\{{yzhan232, zzhao76, zhuo.feng}\}@stevens.edu}

}
\maketitle
\def\thefootnote{*}\footnotetext{These authors contributed equally to this work}\def\thefootnote{\arabic{footnote}}
\begin{abstract}
This work  introduces a highly-scalable spectral graph densification framework (SGL) for learning resistor networks with linear measurements, such as node voltages and currents. We show that the proposed graph learning approach is equivalent to solving the classical graphical Lasso problems with Laplacian-like precision matrices.  We prove that given $O(\log N)$ pairs of voltage and current  measurements, it is possible to recover  sparse  $N$-node resistor networks that  can well preserve the effective resistance distances on the original graph. In addition, the learned graphs also preserve the structural  (spectral) properties of the original graph, which  can potentially be leveraged in many circuit design and optimization tasks. 

To achieve more scalable performance,   we also introduce a solver-free method (SF-SGL) that  exploits multilevel spectral approximation of the graphs and allows for a scalable and flexible decomposition of the entire graph spectrum (to be learned) into multiple different eigenvalue clusters (frequency bands). Such a solver-free approach allows us to more efficiently identify the most spectrally-critical edges for reducing various ranges of spectral embedding distortions. 
Through extensive experiments for a variety of real-world test cases, we show that  the proposed approach is highly scalable for learning  sparse resistor networks without sacrificing solution quality.  We  also introduce a data-driven EDA algorithm for vectorless power/thermal integrity verifications to allow estimating worst-case voltage/temperature (gradient) distributions across the entire chip by leveraging a  few voltage/temperature measurements. 
\end{abstract}

\begin{IEEEkeywords}
spectral graph theory, graph  Laplacian estimation, graphical Lasso, data-driven EDA, vectorless verification
\end{IEEEkeywords}






\section{Introduction}\label{sect:introduction}
Recent years have witnessed a surge of interest in machine learning on graphs \cite{hamilton2017representation}, with the goal of encoding high-dimensional data associated with nodes, edges, or (sub)graphs into low-dimensional vector representations that well preserve the original graph structural (manifold) information. Graph learning  techniques have shown promising results for various important applications such as vertex (data) classification \cite{kipf2017iclr,grover2016node2vec}, link prediction (recommendation systems) \cite{ying-gcn-kdd2018, zhang2018link}, community detection \cite{ zhou2018graph, cai2018comprehensive, goyal2018graph}, drug discovery \cite{rathi2019practical,lim2019predicting }, solving partial differential equations (PDEs) \cite{belbute2020combining, li2020multipole,iakovlev2020learning}, and electronic design automation (EDA) \cite{ma-gnn-test-dac19, zhang2019circuit, wang2020gcn,mirhoseini2021graph}.

 Modern graph learning tasks (without a known input graph topology) typically involve the following two key tasks: (1) graph topology learning for converting high-dimensional node feature (attribute) data into a  graph representation, and (2) graph embedding for converting graph-structured data (e.g. graph topology and node features) into low-dimensional vector representations to facilitate downstream machine learning or data mining tasks. 
 
 Although there exist abundant research studies on graph embedding techniques \cite{hamilton2017representation,hamilton2017inductive,deng2019graphzoom}, it still remains challenging to  learn a meaningful graph topology from  a given data set. To this end, the well-known graphical Lasso method has been proposed as a sparse penalized maximum likelihood estimator for the concentration or precision matrix (inverse of covariance matrix) of a multivariate elliptical distribution \cite{friedman2008sparse}.  The latest  graph signal processing (GSP) based learning techniques also have been proposed to estimate sparse graph Laplacians, which has shown very promising results     \cite{egilmez2017graph,dong2019learning}. For example,        the graph topology learning problem is addressed by restricting the precision matrix (inverse of the sample covariance matrix)  to be a graph Laplacian-like matrix and maximizing a posterior estimation of {{attractive Gaussian Markov Random Field (GMRF)}},
  while an $\ell_1$-regularization term  is leveraged to promote graph sparsity \cite{egilmez2017graph};
      a graph Laplacian learning method is proposed by imposing   additional  spectral
 constraints \cite{kumar2019structured};
  a   graph topology learning approach (GLAD)  is introduced based on an Alternating Minimization (AM) algorithm \cite{shrivastava2019glad}, while \cite{pu2021learning} tries to learn a mapping from node data to the graph structure based on the idea of learning to optimise (L2O) \cite{li2016learning,chen2021learning}.

However, {the state-of-the-art  graph topology learning methods \cite{ egilmez2017graph, dong2019learning, shrivastava2019glad} do not scale  to large data sets} due to their high algorithm complexity. For  example,  recent graph topology learning methods based on  Laplacian matrix estimation    require solving convex   optimization problems, which  have a time complexity of at least {$O(N^2)$} per iteration for $N$ data points and thus can only be applied to rather small data sets  with  a few hundreds of data points \cite{ egilmez2017graph,  dong2019learning,shrivastava2019glad}; the state-of-the-art deep learning based approach (GLAD) can only handle a few thousands of nodes on a high-performance  GPU \cite{shrivastava2019glad}. Consequently, existing graph topology learning methods can not be efficiently applied in   electronic design automation (EDA) tasks considering   the sheer size of modern integrated circuit systems that typically involve millions or billions of elements.
  \begin{figure}
  \includegraphics[width=0.9995\linewidth]{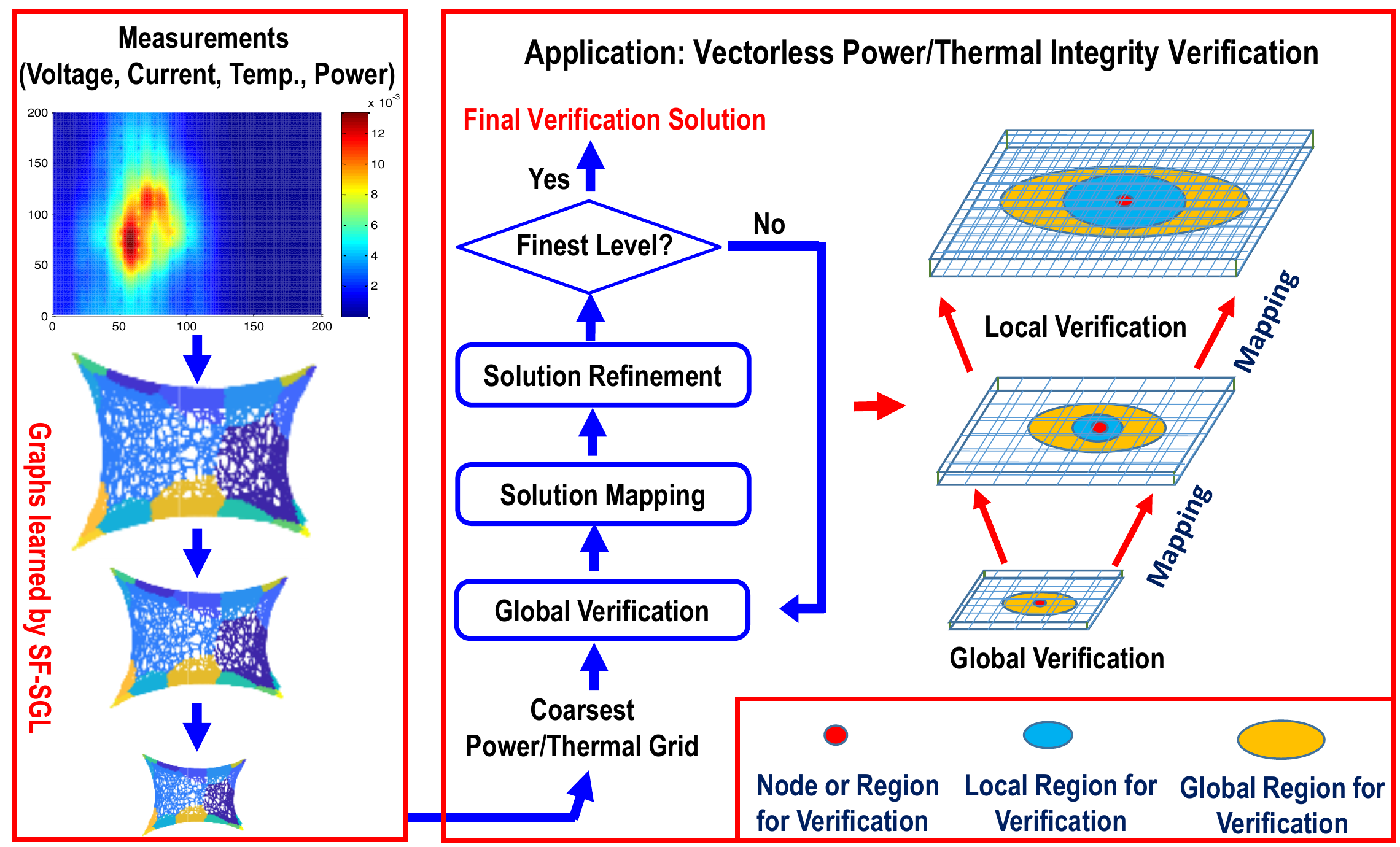}
  \caption{Spectral graph learning for data-driven vectorless  verification.}
  \label{fig:overview_sfsgl}
\end{figure}


 To the best of our knowledge, this paper introduces the first scalable spectral method (SF-SGL) for learning resistor networks from linear voltage and current measurements based on our recent SGL framework \cite{feng2021sgl}. An indispensable  step in the original SGL framework requires to compute Laplacian eigenvalues/eigenvectors for estimating spectral  embedding distortions \cite{feng2021sgl}, which can be potentially accelerated by leveraging the latest theoretical results in spectral graph theory. However, the state-of-the-art spectral  algorithms  strongly rely on fast Laplacian  solvers that are usually difficult to implement in practice and inherently-challenging to accelerate on parallel processors. For example, the effective-resistance based spectral sparsification method \cite{spielman2011graph} requires multiple Laplacian matrix solutions for computing each edge's leverage (sampling) score, while the latest spectral-perturbation based algorithm \cite{feng2020grass} leverages a  Laplacian solver  for   estimating each edge's spectral importance. The proposed algorithm (SF-SGL) is based on {a scalable multilevel spectral graph  densification framework}  for   estimating   attractive Gaussian Markov Random Fields (GMRFs).
 SF-SGL  can efficiently solve  the graphical Lasso problem \cite{friedman2008sparse} with a Laplacian-like precision matrix  by iteratively including the most influential edges to dramatically reduce  spectral embedding distortions. A    unique property of the learned graphs  is that  {the spectral embedding or effective-resistance  distances on the constructed graph  will encode the similarities} between the original input data points (node voltage measurements). Our method  allows each iteration to be completed in $O(N \log N)$ time, whereas existing state-of-the-art methods \cite{ egilmez2017graph,dong2019learning}  require at least $O(N^2)$ time for each iteration. Our analysis for sample complexity   shows that by leveraging the proposed spectral algorithms it is possible to  accurately estimate a sparse resistor network with only $O(\log N)$ voltage (and current) measurements (vectors). 


The proposed solver-free approach (SF-SGL) will also bring new opportunities for developing brand new physics-informed, data-driven EDA algorithms and applications, as shown in Figure \ref{fig:overview_sfsgl}.
In this work, by exploiting   SF-SGL we introduce a data-driven EDA algorithm for vectorless power/thermal integrity verifications to allow estimating worst-case voltage/temperature (gradient) distributions across the entire chip by leveraging only a  few voltage/temperature measurements that can be potentially obtained from on-chip voltage/temperature sensors \cite{chung2011autocalibrated,anik2020chip,ku2019voltage}. 

{{
The  main contribution of this work has been summarized as follows:
\begin{enumerate}
  \item This work introduces a spectral graph densification framework (SGL) for learning resistor networks with linear measurements. We prove that given $O(\log N)$ pairs of voltage and current measurements, it is possible to recover sparse  $N$-node resistor networks that can well preserve the effective resistance distances on the original graph.
 \item To achieve more scalable performance, a solver-free spectral graph learning framework (SF-SGL) is proposed that utilizes the multilevel spectral graph densification framework for constructing the learned graph. Compared to the previous SGL method which requires the Laplacian solver for estimating spectral embedding distortions, the proposed SF-SGL allows us to more efficiently identify the critical edges for constructing the learned graphs.
\item Our extensive experimental results show that the proposed method can produce a hierarchy of high-quality learned graphs in nearly-linear time for a variety of real-world, large-scale graphs and circuit networks.  When compared with prior state-of-the-art spectral methods, such as SGL \cite{feng2021sgl}, the proposed SF-SGL can construct the learned graph in a much faster way with better graph quality.
\item The proposed method has been validated with the application of the data-driven EDA algorithm for vectorless power grid and thermal integrity verification for estimating worst-case voltage/temperature distributions of the entire chip, showing reliable verification accuracy and efficiency.
\end{enumerate}
}}

The rest of this paper is organized as follows:  Section \ref{background_sec} introduces the background of graph learning techniques. Section \ref{main_sec} introduces the theoretical foundation of the original single-level  spectral graph learning  framework (SGL), which also includes the sample and algorithm complexity analysis. Section \ref{sec:solver_free}  extends the SGL framework by introducing a more scalable solver-free  multilevel graph learning approach (SF-SGL). Section
\ref{result_sec}  demonstrates extensive experimental results for learning a variety of real-world, large-scale graph problems, as well as data-driven  vectorless integrity verification tasks, which is followed by the conclusion of this work in Section \ref{conclusion}.

\section{Background}
\label{background_sec}
\subsection{Introduction to Graph Topology Learning}
Given $M$ observations on $N$ data entities  in a data matrix $X=[x_1, ..., x_M]\in {\mathbb{R} ^{N\times M}}$, each column vector of $X$ can be considered as a signal on a graph. {For example, the MNIST data set \cite{lecun1998gradient},  which includes $70,000$ images of handwritten digits with each image having $784$ pixels, will result in a feature matrix $X\in {\mathbb{R} ^{N\times M}}$ with $N=70,000$ and $M=784$.}   The recent  GSP-based graph learning methods \cite{dong2016learning,kalofolias2016learn, egilmez2017graph,kalofolias2017large, dong2019learning} estimate   graph Laplacians from $X$ for achieving the following two desired characteristics: 
\begin{enumerate}
\item \textbf{Smoothness of  graph signals.} The graph signals corresponding to the real-world data should  be sufficiently smooth on the learned graph structure: the signal values will only change gradually across connected neighboring nodes.
\item \textbf{Sparsity of  estimated Laplacians.} Graph sparsity   is another critical consideration in graph learning. One of the most important motivations of learning a graph is to use it for downstream computing tasks. Therefore,  more desired graph topology learning algorithms should allow better capturing and  understanding the  global structure (manifold) of the data set, while producing sufficiently sparse graphs  that   can be easily stored and efficiently manipulated in the downstream  algorithms, such as circuit simulations/optimizations, network partitioning, dimensionality reduction, data visualization, etc.
\end{enumerate}

 \subsection{Existing Methods for  Graph Topology Learning}\label{sec:method}
 \textbf{Problem formulation.} Consider a random vector $x\sim N(0,\Sigma)$ with probability density function:
  \begin{equation}
      f(x)=\frac{\exp{\left(-\frac{1}{2}x^\top \Sigma^{-1}x\right)}}{(2\pi)^{N/2}\det(\Sigma)^{(1/2)}}\propto \det(\Theta)^{1/2}\exp{\left (-\frac{1}{2}x^\top \Theta x\right )},
  \end{equation}
  where $\Sigma=\mathbb{E}[xx^\top]\succ 0$ denotes the covariance matrix, and $\Theta=\Sigma^{-1}$ denotes the precision matrix (inverse covariance matrix). Prior graph topology learning methods aim at estimating sparse precision matrix $\Theta$ from potentially high-dimensional input data, which fall into the following two categories:
 
 \underline{\textbf{(A) The graphical Lasso}} method aims at estimating a sparse precision matrix $\Theta$ using  convex optimization to maximize the log-likelihood   of $f(x)$ \cite{friedman2008sparse}:  
\begin{equation}\label{formula_lasso}
\max_{\Theta}:  \log\det(\Theta)-  Tr(\Theta S)-{\beta}{{\|\Theta\|}}^{}_{1},
\end{equation}
where $\Theta$ denotes a  non-negative definite   precision matrix,   $S$ denotes a sample covariance matrix, and   $\beta$ denotes a regularization parameter. 
The first two terms together can be interpreted as the log-likelihood under a GMRF.  $\|\bullet\|_1$  denotes the entry-wise $\ell_1$ norm, so ${\beta}{{\|\Theta\|}}^{}_{1} $ becomes the sparsity promoting regularization term. This model learns the graph structure by maximizing the penalized log-likelihood. 
{When the sample covariance matrix $S$ is obtained from $M$ i.i.d (independent and identically distributed)  samples $X=[x_1,...,x_M]$, where $X\sim N(0, S)$ has an $N$-dimensional Gaussian distribution with zero mean}, each element in the precision matrix $\Theta_{i,j}$ encodes the conditional dependence between variables $X_i$ and $X_j$. For example, $\Theta_{i,j}=0$ implies that  variables $X_i$ and $X_j$
are conditionally independent, given the rest.   

{\underline{\textbf{(B) The GSP-based Laplacian estimation}} methods have been recently introduced for more efficiently solving the following  convex  problem \cite{dong2019learning,lake2010discovering}:}
\begin{equation}\label{opt2}
  {\max_{\Theta}}: F(\Theta)= \log\det(\Theta)-  \frac{1}{M}Tr({X^\top \Theta X})-\beta {{\|\Theta\|}}^{}_{1},
\end{equation}
where   {$\Theta={L}+\frac{1}{\sigma^2}I$}, ${L}$ represents the set of valid   Laplacian matrices, where $Tr(\bullet)$ denotes the matrix trace, $I$ denotes the identity matrix, and $\sigma^2>0$ represents the prior feature variance. The three terms in   (\ref{opt2})  correspond to the terms  $\log\det(\Theta)$, $Tr(\Theta S)$ and  ${\beta}{{\|\Theta\|}}^{}_{1}$ in   (\ref{formula_lasso}),  respectively.  When  every   column vector  in the  data matrix $X$ 
is regarded as a graph signal vector, there is a close connection between  the formulation in (\ref{opt2}) and the original graphical Lasso problem \cite{friedman2008sparse}. 
Since {$\Theta={L}+\frac{1}{\sigma^2}I$}   are  symmetric and positive definite  (PSD) matrices (or M matrices) with non-positive off-diagonal entries, this formulation will lead to the estimation of  attractive GMRFs \cite{ dong2019learning, slawski2015estimation}. 
{When  $X$ is non-Gaussian}, the formulation in (\ref{opt2}) can be regarded as a
Laplacian estimation method by minimizing the Bregman divergence between positive definite matrices   induced by the function $\Theta \mapsto -\log \det(\Theta)$ \cite{slawski2015estimation}.


 \section{SGL: A Spectral  Learning Approach}\label{main_sec}

  \begin{figure}
\centering
\epsfig{file=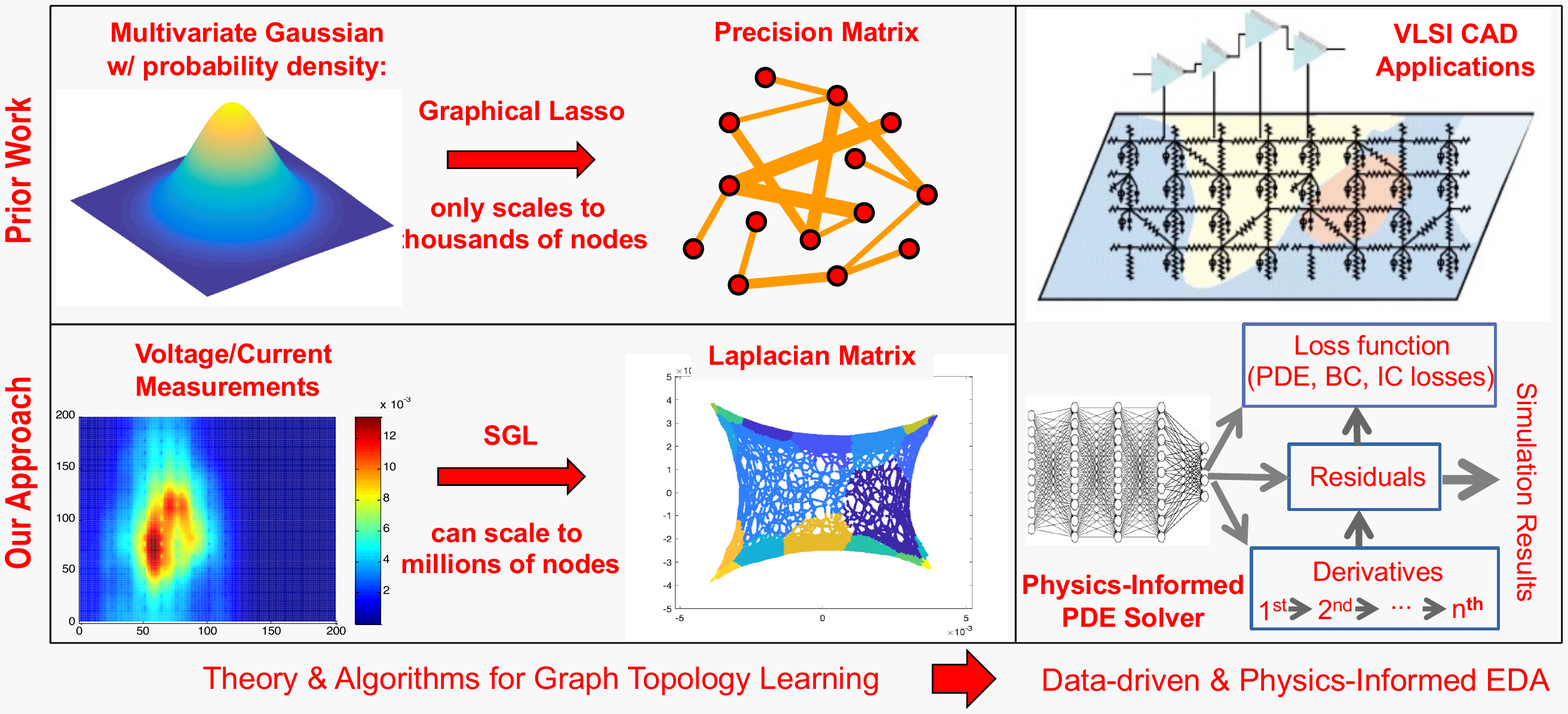, width=0.4965\textwidth}
\caption{Overview of the proposed framework for graph topology learning. \protect\label{fig:sgloverview}}
\end{figure}
 Consider $M$ linear measurements of  $N$-dimensional voltage and current vectors   stored in  data matrices $X\in {\mathbb{R} ^{N\times M}}$ and $Y\in {\mathbb{R} ^{N\times M}}$, where the $i$-th column vector $X(:,i)$ is  a  voltage response (graph signal) vector  corresponding to the  $i$-th input current   vector $Y(:,i)$. This work introduces a      spectral graph learning method (SGL) for     Laplacian   matrix  estimation  by   exploiting the linear voltage ($X$) and current ($Y$) measurements \cite{feng2021sgl}, as shown in Figure \ref{fig:sgloverview}.

 \begin{figure*}
\centering
\epsfig{file=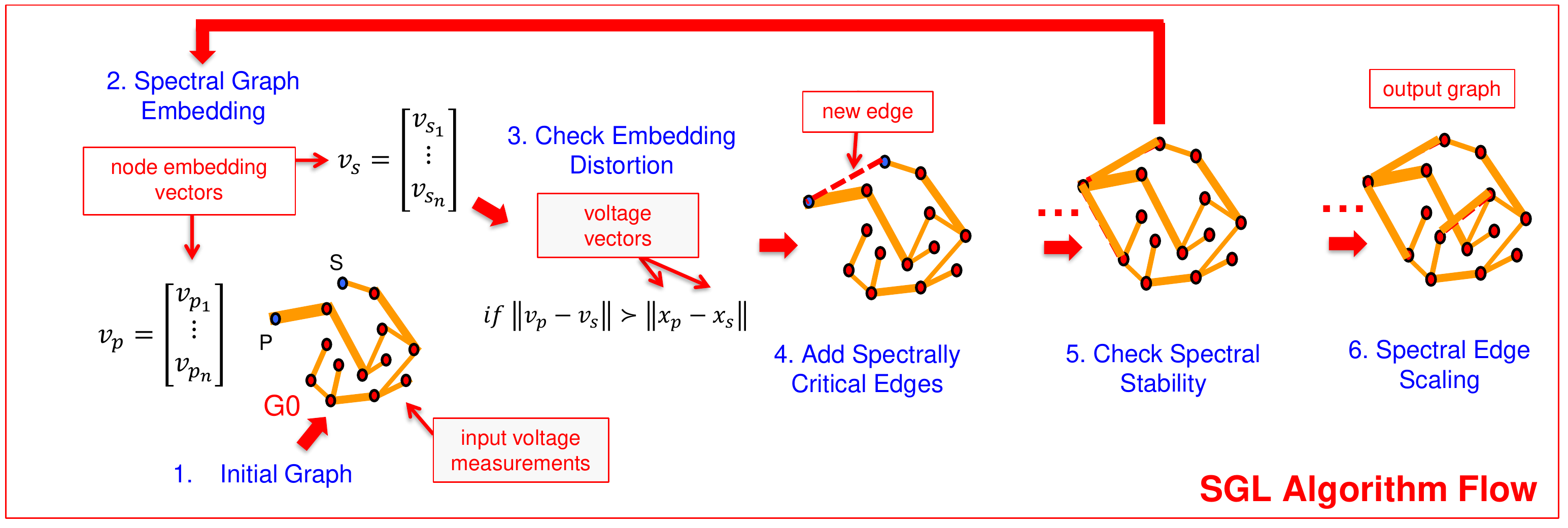, width=0.9799598\textwidth}
\caption{SGL  for spectral  graph topology learning  from voltage/current measurements. {($v_p$ and $v_s$ are node embedding vectors for node $p$ and $s$, $x_p$ and $x_s$ are voltage vectors for node $p$ and $s$.)} \protect\label{fig:sgl}}
\end{figure*}


To quantify the smoothness of a graph signal vector $x$ over a   undirected graph   $G=(V,E, w)$,  the following Laplacian quadratic form  can been adopted:
\begin{equation}\label{quad_form}
{x^\top}L x= \sum\limits_{\left( {s,t} \right) \in E}
{{w_{s,t}}{{\left( {x\left( s \right) - x\left( t \right)}
\right)}^2}},
\end{equation}
where $L=D-W$ denotes the graph Laplacian  matrix, $w_{s,t}=W(s,t)$  denotes  the weight for edge ($s,t$), while  $D$ and $W$ denote the  degree  and   the weighted adjacency matrices, respectively. {The smoothness  of a set of signals }$X$ over graph $G$ can be computed using the following matrix trace  \cite{kalofolias2016learn}:
\begin{equation}\label{trace}
Q(X,L)=Tr({X^\top L X}).
\end{equation}

\subsection{Gradient Estimation via  Perturbation Analysis}\label{sec:theory}
Express the graph Laplacian matrix as follows
\begin{equation}\label{LaplacianEdge}
L=\sum\limits_{\left( {s,t} \right) \in E}
{w_{s,t}}e_{s,t}e^\top_{s,t},
\end{equation}
where  ${e_{s}} \in \mathbb{R}^N$ denotes  the standard basis vector with all zero entries except for the $s$-th entry being $1$, and    ${e_{s,t}}=e_s-e_t$. By substituting (\ref{LaplacianEdge}) into    (\ref{opt2}),
we have:
\begin{equation}\label{opt1}
\begin{split}
F &= \sum\limits_{i=1}^N \log (\lambda_i+\frac{1}{\sigma^2})
-  \\&\frac{1}{M}\left(\frac{Tr(X^\top X)}{\sigma^2}+\sum\limits_{\left( {s,t} \right) \in E}
{w_{s,t}}\|X^\top e_{s,t}\|_2^2\right)-4 \beta \sum\limits_{\left( {s,t} \right) \in E}
{w_{s,t}},
\end{split}
\end{equation}
{{where  ${\lambda _i}$ are non-decreasing eigenvalues of $L$ for $i=1, \cdots, N$. Given the eigenvalue ${\lambda _i}$ and the corresponding eigenvector $u_i$, it satisfies:}
\begin{equation}\label{eigen}
L u_i = \lambda_i u_i.
\end{equation}}

Taking the partial derivative of (\ref{opt1}) with respect to  $w_{s,t}$ leads to:
\begin{equation}\label{optF}
\frac{\partial F}{\partial w_{s,t} } = \sum\limits_{i=1}^N \frac{1}{\lambda_i+1/\sigma^2} \frac{\partial \lambda_i}{\partial w_{s,t} }
-  \frac{1}{M}\|X^\top e_{s,t}\|_2^2-4 \beta.
\end{equation}

  The last two terms in (\ref{optF}) both imply   constraints on graph sparsity: adding more edges will lead to a greater trace $Tr({X^\top \Theta X})$. Consequently, we can safely  choose $\beta=0$ in the rest of this work,  without impacting the ranking of   candidate edges and thus  the final solution quality of  SGL. 
\begin{theorem}\label{thm:pertub}
The spectral perturbation $\delta {\lambda _i} $ due to adding a candidate edge   $({s,t})$ into the latest graph will be: 
\begin{equation}\label{sc}
 \delta {\lambda _i}=  \delta w_{s,t}\left( {{{u_i^\top}e_{s,t}}  } \right)^2.
\end{equation} 
\end{theorem}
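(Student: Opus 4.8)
The plan is to treat $\delta\lambda_i$ as a first-order perturbation of the simple eigenvalue $\lambda_i$ of the symmetric Laplacian $L$ when the rank-one term $\delta w_{s,t}\, e_{s,t} e_{s,t}^\top$ is added, following the standard Rayleigh–Schr\"odinger / matrix-perturbation argument. Writing $\widetilde{L} = L + \delta w_{s,t}\, e_{s,t} e_{s,t}^\top$ from the edge-wise expansion (\ref{LaplacianEdge}), and letting $\widetilde{\lambda}_i = \lambda_i + \delta\lambda_i$, $\widetilde{u}_i = u_i + \delta u_i$ denote the perturbed eigenpair, I would substitute into $\widetilde{L}\widetilde{u}_i = \widetilde{\lambda}_i \widetilde{u}_i$, expand, and discard second-order terms $\delta w_{s,t}\,\delta u_i$ and $\delta\lambda_i\,\delta u_i$. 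This yields the first-order equation $L\,\delta u_i + \delta w_{s,t}\,(e_{s,t}^\top u_i)\,e_{s,t} = \lambda_i\,\delta u_i + \delta\lambda_i\, u_i$, using $Lu_i = \lambda_i u_i$ from (\ref{eigen}).

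Next I would project this equation onto $u_i$ by left-multiplying with $u_i^\top$ and using orthonormality of the eigenvectors, $u_i^\top u_i = 1$. The term $u_i^\top L\,\delta u_i$ equals $(Lu_i)^\top \delta u_i = \lambda_i\, u_i^\top \delta u_i$ by symmetry of $L$, which cancels against $\lambda_i\, u_i^\top \delta u_i$ on the right-hand side. What survives is $\delta w_{s,t}\,(e_{s,t}^\top u_i)(u_i^\top e_{s,t}) = \delta\lambda_i$, i.e. $\delta\lambda_i = \delta w_{s,t}\,(u_i^\top e_{s,t})^2$, which is exactly (\ref{sc}). The expression $u_i^\top e_{s,t} = u_i(s) - u_i(t)$ makes the bracketed quantity the difference of eigenvector entries at the two endpoints, consistent with the quadratic-form interpretation in (\ref{quad_form}).

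The main obstacle — or rather the point that must be handled carefully rather than being genuinely hard — is justifying that the first-order term is the leading contribution and that the perturbation is well defined: this requires $\lambda_i$ to be a simple eigenvalue (so that $\delta u_i$ can be chosen orthogonal to $u_i$ and the projection argument is legitimate), and it requires $\delta w_{s,t}$ to be treated as an infinitesimal so that $O(\delta w_{s,t}^2)$ terms are negligible. I would note that adding a single edge weight perturbs $L$ only along a rank-one direction, so the classical formula $\partial\lambda_i / \partial w_{s,t} = u_i^\top (\partial L/\partial w_{s,t}) u_i = u_i^\top e_{s,t} e_{s,t}^\top u_i = (u_i^\top e_{s,t})^2$ applies directly, and (\ref{sc}) is just this derivative multiplied by the increment $\delta w_{s,t}$. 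For repeated/clustered eigenvalues one would instead diagonalize the rank-one perturbation within the eigenspace, but since the candidate edge contributes a single rank-one update the formula for the relevant eigenvalue still holds; I would either assume simplicity of the spectrum or remark that this is the generic case that suffices for the edge-ranking procedure used later.
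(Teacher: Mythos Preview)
Your proposal is correct and follows essentially the same first-order perturbation argument as the paper: expand $(L+\delta L)(u_i+\delta u_i)=(\lambda_i+\delta\lambda_i)(u_i+\delta u_i)$, drop second-order terms, and project onto $u_i$ to isolate $\delta\lambda_i$. The only cosmetic difference is that the paper explicitly expands $\delta u_i$ in the eigenbasis before multiplying by $u_i^\top$, whereas you invoke the symmetry of $L$ directly to cancel $u_i^\top L\,\delta u_i$ against $\lambda_i\,u_i^\top\delta u_i$; both routes are equivalent.
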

\begin{proof} 
Consider the following spectral perturbation analysis:
\begin{equation}\label{formula_eig_perturb1}
\left( {L + \delta L} \right)\left( {{u_i} + \delta {u_i}} \right) = \left( {{\lambda _i} + \delta {\lambda _i}} \right)\left( {{u_i} + \delta {u_i}} \right),
\end{equation}
where a small perturbation $\delta L$ due to including a new edge  $(s,t)$  is applied to $L$, which results in perturbed eigenvalues and eigenvectors  ${\lambda _i} + \delta {\lambda _i}$ and ${u_i} + \delta {u_i}$ for $i=1,...,N$, respectively. By only keeping   the first-order terms, we have:
\begin{equation}\label{formula_eig_perturb1_first_order}
 {L}\delta {u_i} + {\delta L}{u_i} = {{\lambda _i}{\delta {u_i}} + \delta {\lambda _i}}  {{u_i} }.
\end{equation}
Express $\delta u_i$ in terms of the original eigenvectors $u_i$  {{for}} $i=1,...,N$ as:
{{\begin{equation}\label{delta u_i}
{\delta {u_i}} = \sum\limits_{i = j}^{N} {{\alpha _j}{u_j}}.
\end{equation}}}
By substituting (\ref{delta u_i}) into (\ref{formula_eig_perturb1_first_order}), we have:
{{\begin{equation}\label{formula_eig_perturb1_first_order_expand}
 {L}\sum\limits_{j = 1}^{N} {{\alpha _j}{u_j}} + {\delta L}{u_i} = {{\lambda _i}\sum\limits_{j = 1}^{N} {{\alpha _j}{u_j}} + \delta {\lambda _i}}  {{u_i} }.
\end{equation}}}
After multiplying ${u_i^\top}$ to both sides of (\ref{formula_eig_perturb1_first_order_expand}) we have:
\begin{equation}\label{formula_eig_perturb1_conclusion}
 \delta {\lambda _i} = \delta w_{s,t}\left( {{{u_i^\top}e_{s,t}} } \right)^2.
\end{equation}
\end{proof}

Subsequently, we  construct the following  eigensubspace matrix for spectral graph embedding using the first $r-1$  nontrivial  Laplacian eigenvectors
\begin{equation}\label{subspace}
U_r=\left[\frac{u_2}{\sqrt {\lambda_2 +1/\sigma^2}},..., \frac{u_r}{\sqrt {\lambda_r +1/\sigma^2}}\right]. 
\end{equation}
  Theorem \ref{thm:pertub} allows each  edge's \textbf{spectral sensitivity} $s_{s,t}=\frac{\partial F}{\partial w_{s,t} }$ in  (\ref{optF}) to be written as: 
\begin{equation}\label{optF2}
\begin{split}
 s_{s,t}&=\sum\limits_{i=1}^N \frac{1}{\lambda_i+1/\sigma^2} \frac{\partial \lambda_i}{\partial w_{s,t} }-\frac{1}{M}\|X^\top e_{s,t}\|_2^2\\
 &= \|U_N^\top e_{s,t}\|_2^2
-  \frac{1}{M}\|X^\top e_{s,t}\|_2^2\\
&\approx \|U_r^\top e_{s,t}\|_2^2
-  \frac{1}{M}\|X^\top e_{s,t}\|_2^2\\
&=z_{s,t}^{emb}-\frac{1}{M}z_{s,t}^{data}, where ~~~r \ll N.
\end{split}
\end{equation}
In the above expressions, $z_{s,t}^{emb}=\|U_r^\top e_{s,t}\|^2_2$ and $z_{s,t}^{data}={\|X^\top e_{s,t}\|^2_2}{}$ denote the $\ell_2$ distances in the spectral embedding space as well as the    data (voltage measurement) vector space, respectively.  The partial derivative term $s_{s,t}$ in (\ref{optF2}) can be leveraged for solving the  optimization task in (\ref{opt2}) using gradient-based methods, such as the general stagewise algorithm for  group-structured learning \cite{tibshirani2015general}.
 
\subsection{Spectral   Densification for  Graph Topology Learning} 

\textbf{Spectrally-critical edges.} We define the {spectral embedding distortion} $\eta_{s,t}$ of an edge $(s, t)$  to be: 
 \begin{equation}\label{embedDist}
 \eta_{s,t}=M\frac{ z_{s,t}^{emb}}{z_{s,t}^{data}}.
 \end{equation}
We also call the candidate edge $(p, q)$ that has a relatively large spectral sensitivity ($s_{p,q}$) or embedding distortion {{$ {\eta_{p,q}}=M{z_{p,q}^{emb}}/{z_{p,q}^{data}}$}}  a {spectrally-critical edge}.  
 
 \textbf{The proposed SGL iterations.} Let's consider the $(k+1)$-th SGL iteration for finding a few  most spectrally-critical candidate edges with the largest spectral sensitivities (or embedding distortions) to be added into the latest graph $G_k=(V_k,E_k, w_k)$. 
     (\ref{optF2}) implies that including such   edges into the latest graph $G_k$ will more effectively improve the objective function and significantly mitigate the spectral embedding distortions.  
    
 \textbf{Spectral graph sparsification (prior work).}  Prior   research  shows that  for every undirected graph  there exists a  sparsified graph with $O (\frac{N\log N}{\epsilon^2})$ edges  that can be obtained  by sampling each edge $(p, q)$ with a probability (leverage score) $p_{e}$ proportional to its  effective resistance \cite{spielman2011graph}:
 \begin{equation}\label{equ:resist}
 p_e\propto  \frac{R^{\textit{eff}}_{p,q}}{R^{}_{p,q}}=w_{p,q}R^{\textit{eff}}_{p,q},
 \end{equation}
  where $R^{}_{p,q}=1/w_{p,q}$ represents the original  resistance and  $R^{\textit{eff}}_{p,q}$ is the edge effective resistance. In addition, the following inequality will be valid \cite{spielman2011graph}:
 \begin{equation}\label{equ:spectralspar}
 \forall x \in \mathbb{R}^N~~~ (1-\epsilon){x^\top}L x \le {x^\top}\Tilde{L} x \le (1+\epsilon){x^\top}L x,
 \end{equation}
where $L$ and $\Tilde{L}$ are the original and sparsified graph Laplacian matrices, respectively. 
 
 \textbf{Spectral graph densification.} For  any candidate edge  selected based on its spectral sensitivity during an SGL iteration, by setting its  weight as: 
  \begin{equation}\label{equ:edgeweight}
 {w_{p,q}}\propto\frac{1}{z_{p,q}^{data}},
 \end{equation}
 the spectral embedding distortion can be estimated as follows
   \begin{equation}\label{equ:embdist}
 {{\eta_{p,q}}=M\frac{z_{p,q}^{emb}}{z_{p,q}^{data}} \propto w_{p,q} R^{\textit{eff}}_{p,q},}
 \end{equation}
 which becomes the  leverage score for spectral  graph sparsification \cite{spielman2011graph}. Consequently, as opposed to spectral sparsification,  SGL   can be regarded as a {spectral graph densification} procedure that aims to identify and include   $O (N\log N)$  spectrally-critical edges with large spectral sensitivities (embedding distortions).

\textbf{Algorithm convergence.} The   optimal solution of (\ref{opt2}) can be achieved when the maximum edge sensitivity ($s_{max}$) in (\ref{optF2}) becomes zero  or equivalently when the maximum  spectral embedding distortion  ($\eta_{max}$) in (\ref{embedDist})  becomes one.  Moreover, upon the convergence of SGL iterations, the  spectral embedding (effective-resistance) distances on the learned graphs    will   encode the $\ell_2$ distances between the original data points (voltage measurements), which can be exploited in many important tasks, such as VLSI CAD, manifold learning, dimensionality reduction, and data visualization \cite{zhao:dac19,belkin2003laplacian,carey2017graph,imre2020spectrum}.

\subsection{Sample Complexity of the SGL Algorithm}
The sample complexity of SGL can be obtained by analyzing the required number of voltage vectors (measurements) for accurate graph learning. We assume $\sigma^2\rightarrow +\infty$. For the ground-truth graph   $G_*$,  we define its edge weight matrix  $W_*$ to be a diagonal matrix with $W_*(i,i)=w_i$, and  its injection matrix as:
\begin{equation}
B_*(i,p)=\begin{cases}
1 & \text{ if } p \text{ is  i-th edge's    head}\\
-1 & \text{ if } p \text{ is   i-th edge's tail} \\
0 & \text{ otherwise }. 
\end{cases}
\end{equation}  
Consequently, the Laplacian matrix of   $G_*$   can   be written as 
\begin{equation}\label{effRes2}
L_*=B_*^\top W_* B_*.
\end{equation}
Therefore, the effective resistance $R_*^\textit{eff}({s,t})$ between nodes $s$ and $t$ can be expressed as:
\begin{equation}\label{effRes2}
R_*^\textit{eff}({s,t})=e^\top_{s,t}L_*^+e_{s,t}=\|W_*^{\frac{1}{2}} B_*L_*^{+}e_{s,t}\|_2^2,
\end{equation}
where $L^+_*$ represents the Moore–Penrose pseudoinverse of $L_*$. According to the Johnson-Lindenstrauss Lemma, the effective-resistance distance for every pair of nodes satisfies \cite{spielman2011graph}:
\begin{equation}\label{effResJL}
(1-\epsilon)R_*^\textit{eff}({s,t})\le\|X^\top e_{s,t}\|_2^2 \le (1+\epsilon)R_*^\textit{eff}({s,t}),
\end{equation} 
where  the  data (voltage measurement) matrix $X\in \mathbb{R}^{N\times M}$ is created by going through the following steps:
\begin{enumerate}
\item Let $C$ be a random $\pm \frac{1}{\sqrt{M}}$ matrix of dimension $M\times |E|$, where $|E|$ denotes the number of edges and $M=24 \log \frac{N}{\epsilon^2}$ denotes the number of voltage measurements;
\item Construct   $Y=CW_*^{\frac{1}{2}}B_*$, with the $i$-th row vector denoted by $y^\top_i$; 
\item Solve $L_* x_i= y_i$ for all rows in $C$ ($1 \le i\le M$), and construct $X$ matrix using $x_i$ as its $i$-th column vector.
\end{enumerate}
Obviously, (\ref{effResJL}) implies that given  $M\ge O(\log {N}/{\epsilon^2})$ voltage vectors (measurements) obtained through the above procedure,  $(1\pm \epsilon)$-approximate effective resistances can be computed by 
\begin{equation}
\tilde R_*^\textit{eff}(s,t)=\|X^\top e_{s,t}\|_2^2
\end{equation}
for any pair of nodes $(s,t)$ in the original graph $G_*$. Consider the following close connection between  effective resistances and  spectral graph properties (such as the first few Laplacian eigenvalues/eigenvectors):
\begin{equation}\label{formula_Reff}
R^\textit{eff}_{s,t}=\|U_N^\top e_{s,t}\|^2_2,~\text{where~} U_N=\left[\frac{u_2}{\sqrt {\lambda_2}},..., \frac{u_N}{\sqrt {\lambda_N}}\right].
\end{equation}
Therefore,  using $O(\log {N})$ voltage measurements  would be sufficient for  SGL  to learn an $N$-node graph for well preserving the original graph effective-resistance distances.

\subsection{Key Steps in the SGL Algorithm}\label{sec:overview}
To achieve good runtime and memory efficiency in graph learning tasks that may involve a large number of nodes,  the proposed SGL algorithm  strives to iteratively identify and include the most influential (spectrally-critical) edges  into the latest graph until no such edges  can be found, through the following key steps, as shown in Figure \ref{fig:sgl}:

 \textbf{Step 1: Initial graph construction.}
(\ref{optF2}) implies that by iteratively identifying and adding the most influential edges (with the largest sensitivities) into the latest graph,   the graph spectral embedding (or effective-resistance) distance will encode the   $\ell_2$ distances between the  original data (voltage measurement)  vectors. 
{ The ideal pool of candidate edges should include all possible edge connections, e.g. forming a complete graph with $O(N^2)$ edges for $N$ data samples. The proposed graph learning algorithm can be better understood through the following steps: a) each initial edge weight of the complete graph is set to be a very small value (close to zero); b) the edge weight with the highest spectral sensitivity will be updated with a greater value; c) repeat step b) until no edge can be updated (no positive edge sensitivity exists). The above iterative scheme is equivalent to starting with no edge connectivity. However, considering all edge connections will require the quadratic complexity, which may lead to rather poor runtime scalability.}
To achieve a better runtime scalability,    $k$-nearest-neighbor (kNN) graph \cite{malkov2018efficient} can be  leveraged as a sparsified complete graph. {Note that when the effective-resistance distances encode the Euclidean distances between data samples, the proposed graph learning iterations will converge. This implies that when the learned graph has a tree-like structure, the effective-resistance distances will approximately match the shortest path distances and thus encode the Euclidean distances between data samples. Therefore, Euclidean distance becomes a natural choice for kNN graph construction.} 

However, choosing an optimal $k$ value (the number of nearest neighbors) for constructing  kNN graphs can still be challenging for general graph learning tasks:  choosing a too large $k$   allows well approximating the global structure of the manifold for the original data points (voltage measurements), but will  result in a rather dense graph;  choosing a too small $k$ may lead to many small isolated graphs, which may  slow down the   iterations.  
Since circuit networks   are typically   very sparse (e.g. 2D or 3D meshes) in nature, the voltage or current measurements (vectors) usually lie near  low-dimensional manifolds, which allows choosing a proper $k$ for our graph learning tasks. To achieve a good trade-off between   complexity and quality,   the initial graph will  be set up through the following steps: \textbf{(1)} Construct a connected kNN graph with a relatively small $k$ value (e.g. $5\le k \le 10$), which will suffice for approximating the global manifold corresponding to the original   measurement data; \textbf{(2)} Sparsify the kNN graph by extracting a  maximum spanning tree (MST)   that can serve as a reasonably good initial graph in practice. Later, SGL will   incrementally  improve the graph by iteratively adding the most influential off-tree edges   from  the   initial kNN graph until convergence. {As shown in Figure  \ref{fig:objknn}, using different k values will not significantly impact the final objective function value after going through the proposed graph learning iterations.}
  
 \textbf{Step 2: Spectral graph embedding.}
Spectral graph embedding   leverages the first few Laplacian eigenvectors  for mapping nodes onto low-dimensional space \cite{belkin2003laplacian}. The eigenvalue decomposition of  Laplacian matrix  is usually the computationally demanding in spectral graph embedding, especially for large graphs. To achieve good scalability,  it is possible to exploit recent fast  Laplacian eigensolvers with nearly-linear time complexity \cite{zhao2021wsdm}. 

  \textbf{Step 3: Influential edge identification.}
Given the first few Laplacian eigenvalues and eigenvectors, SGL will efficiently identify the most influential off-tree edges by looking at each candidate edge's sensitivity score  defined in (\ref{optF2}). Specifically,    each  off-tree edge  that belongs to  the kNN graph will be sorted according to its edge sensitivity. Only a few most influential edges with the largest sensitivities  will be included into the latest graph. Note that when   $r\ll N$, the following inequality  holds for any edge $(s, t)$: 
\begin{equation}\label{distBound}
 \|U_r^\top e_{s,t}\|^2_2=z_{s,t}^{emb} < \|U_N^\top e_{s,t}\|^2_2 \le R^\textit{eff}({s,t}),
\end{equation}
 which implies that the edge sensitivities ($s_{s,t}$) approximately computed using the first $r$ eigenvectors will always be   lower than the actual values. It is obvious that using more eigenvectors for spectral   embedding   will lead to more accurate estimation of edge sensitivities. For typical circuit networks, edge   sensitivities computed using only a small number (e.g. $r<10$) of   eigenvectors will be sufficiently accurate for ranking the off-tree edges.

  \textbf{Step 4: Convergence checking.}
  In this work, we examine the maximum edge sensitivities   computed by (\ref{optF2}) for checking the convergence of SGL iterations. If there exists no  off-tree edge that has a sensitivity greater than a given threshold ($s_{max}\ge tol$), the SGL iterations will be terminated. We note that choosing different  tolerance ($tol$) levels  will result in  graphs with  different densities. For example, choosing a smaller threshold   will result in more edges   to be included so that the final spectral embedding distances on the learned graph can more accurately encode the distances between the original data points (voltage measurements).

    \textbf{Step 5:  Spectral edge scaling.}
  To further improve the spectral approximation quality, we choose to globally scale up edge weights of the learned graph obtained via SGL. Given the ground truth Laplacian matrix ${L_*}$, the effective resistance between nodes $s$ and $t$ can be represented as $R_{s,t}^{\textit{eff}}= e_{s,t}^T {L^+_*}{e}_{s,t}$. If we consider the graph as a resistor network with each conductance value corresponding to each edge weight, $R_{s,t}^{\textit{eff}}$ can be regarded as the power dissipation of the resistor network when a unit current is flowing into node $s$ and out from node $t$. By relaxing the vector $\mathbf{e}_{s,t}$ with a random vector $y_i$ that is orthogonal to the all-one vector, it can be shown that matching the power dissipations between the original graph and the learned graph   will immediately lead to  improved spectral approximation of the learned graph. Given the normalized current vectors $Y = (y_1,  \cdots, y_M)$, corresponding measurement (voltage) vectors $X = (x_1, \cdots, x_M)$ and $\tilde{X} = (\tilde{x}_1, \cdots, \tilde{x}_M)$ can be computed for the original graph and the learned graph, respectively, where $x_i = {L^+_*} y_i$  and $\tilde{x}_i = L^+y_i$. To better match the structural properties of the original graph, we propose to globally scale up the edge weights in learned graph:
     \begin{equation}\label{scaling2}
w_{s,t}= w_{s,t}*\alpha'
\end{equation}
  with the following scaling factor $\alpha'$:
\begin{equation}\label{eqn:scaling}
    \alpha' = \frac{1}{M}\sum_{i=1}^{M}{\frac{y_i^\top L^+y_i}{y_i^\top {L^+_*} y_i}} = \frac{1}{M}\sum_{i=1}^{M}{\frac{y_i^\top \tilde{x}}{y_i^\top x_i}}.
\end{equation}


\subsection{Algorithm  Flow and Complexity of SGL }\label{main:complexity}
  The detailed SGL algorithm flow  has been shown in Algorithm \ref{alg:sgl}. All the aforementioned   steps in SGL can be accomplished in nearly-linear time by leveraging recent high-performance algorithms for   kNN graph construction \cite{malkov2018efficient},   spectral graph embedding   for influential edge identification \cite{zhao:dac19,zhao2021wsdm}, and fast Laplacian solver for edge scaling \cite{miller:2010focs,zhiqiang:iccad17}. Consequently, each SGL iteration can be accomplished in  nearly-linear time, whereas the state-of-the-art methods require at least $O(N^2)$ time \cite{dong2019learning}.  
  
\begin{algorithm}
 { \caption{The SGL Algorithm Flow.} \label{alg:sgl}
\textbf{Input:} The voltage measurement matrix $X \in {\mathbb{R} ^{N\times M}}$, input current measurement matrix $Y \in {\mathbb{R} ^{N\times M}}$, $k$ for initial kNN graph construction, $r$ for constructing the  projection matrix in (\ref{subspace}), the maximum edge sensitivity tolerance ($tol$),  and the edge sampling ratio ($0 < \beta \le 1$). ~~~\textbf{Output:} The learned graph   $G=(V, E, w)$.\\

\begin{algorithmic}[1]
    \STATE {Construct a   kNN graph $G_{o}=(V, E_o, w_o)$ based on $X$.}
     \STATE {Extract an MST subgraph $T$ from $G_{o}$. }
    \STATE {Assign $G=T=(V, E, w)$ as the initial graph. }
     \WHILE{$s_{max} \ge tol$}
     \STATE{Compute the projection matrix $U_r$ with (\ref{subspace}) for the latest graph $G$.}
     \STATE{Sort  off-tree edges $(s,t)\in E_o\setminus E$  according to their sensitivities computed by $s_{s,t}=\frac{\partial F}{\partial w_{s,t} }$ using (\ref{optF2}).}
     \STATE{Include  an off-tree edge $(s, t)$ into $G$ if its  $s_{s, t}>tol$ and it has been ranked among the top   $\ceil{ N{\beta}}$   edges. }
       \STATE{Record the maximum edge sensitivity $s_{max}$. }
     \ENDWHILE
     \STATE{Do spectral edge  scaling  using $X$ and $Y$ following (\ref{scaling2});}
    \STATE {Return the learned graph $G$.}
\end{algorithmic}
}
\end{algorithm}
\vspace{-0pt}

\section{SF-SGL: Solver-Free Spectral Graph Learning} {\label{sec:solver_free}}

\begin{figure*}[!htb]
\minipage{0.32451\textwidth}
  \includegraphics[width=\linewidth]{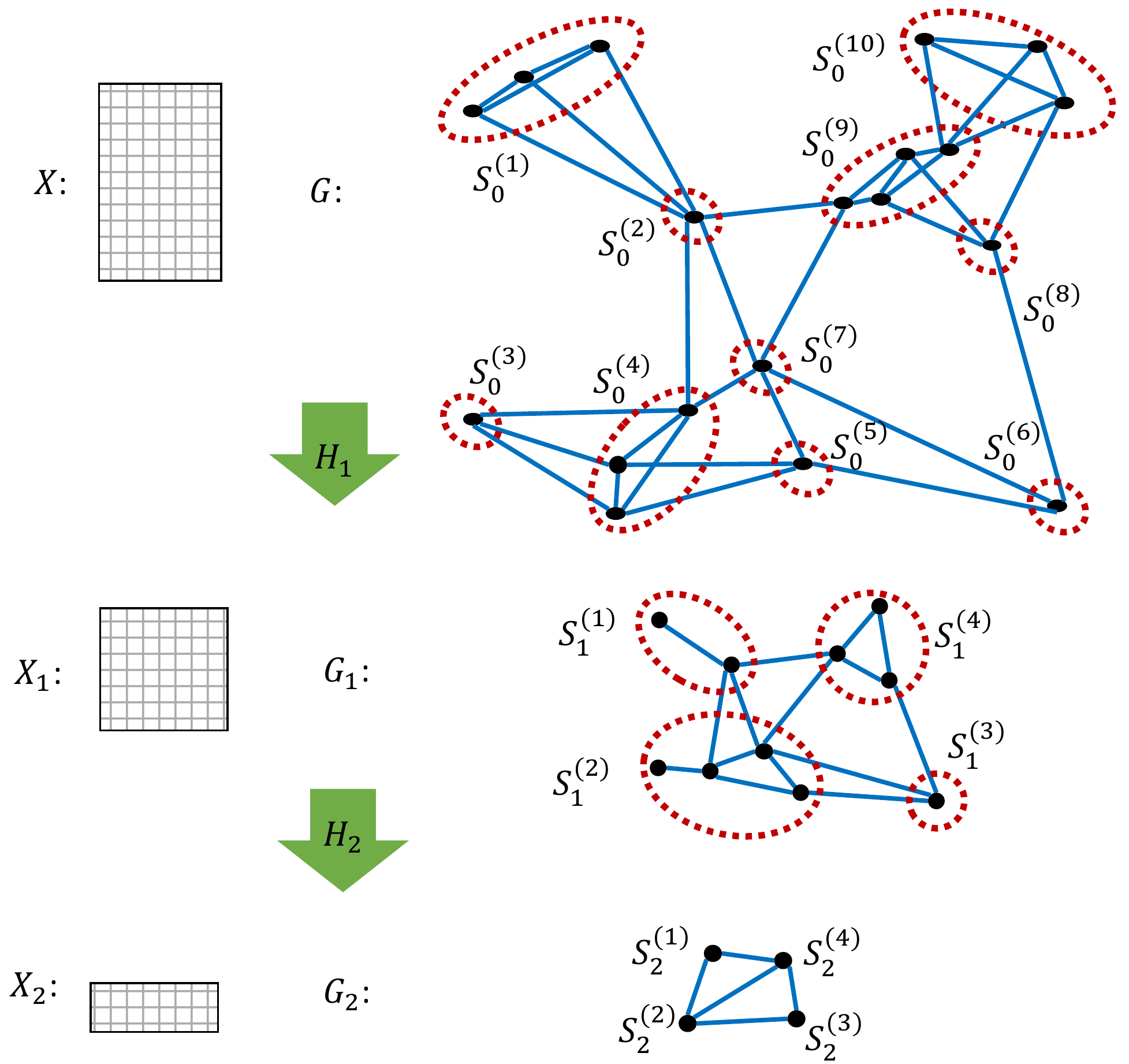}
  \caption{Graph coarsening by local   embedding.}\label{fig:coarsening}
\endminipage\hfill
\minipage{0.265\textwidth}
  \includegraphics[width=\linewidth]{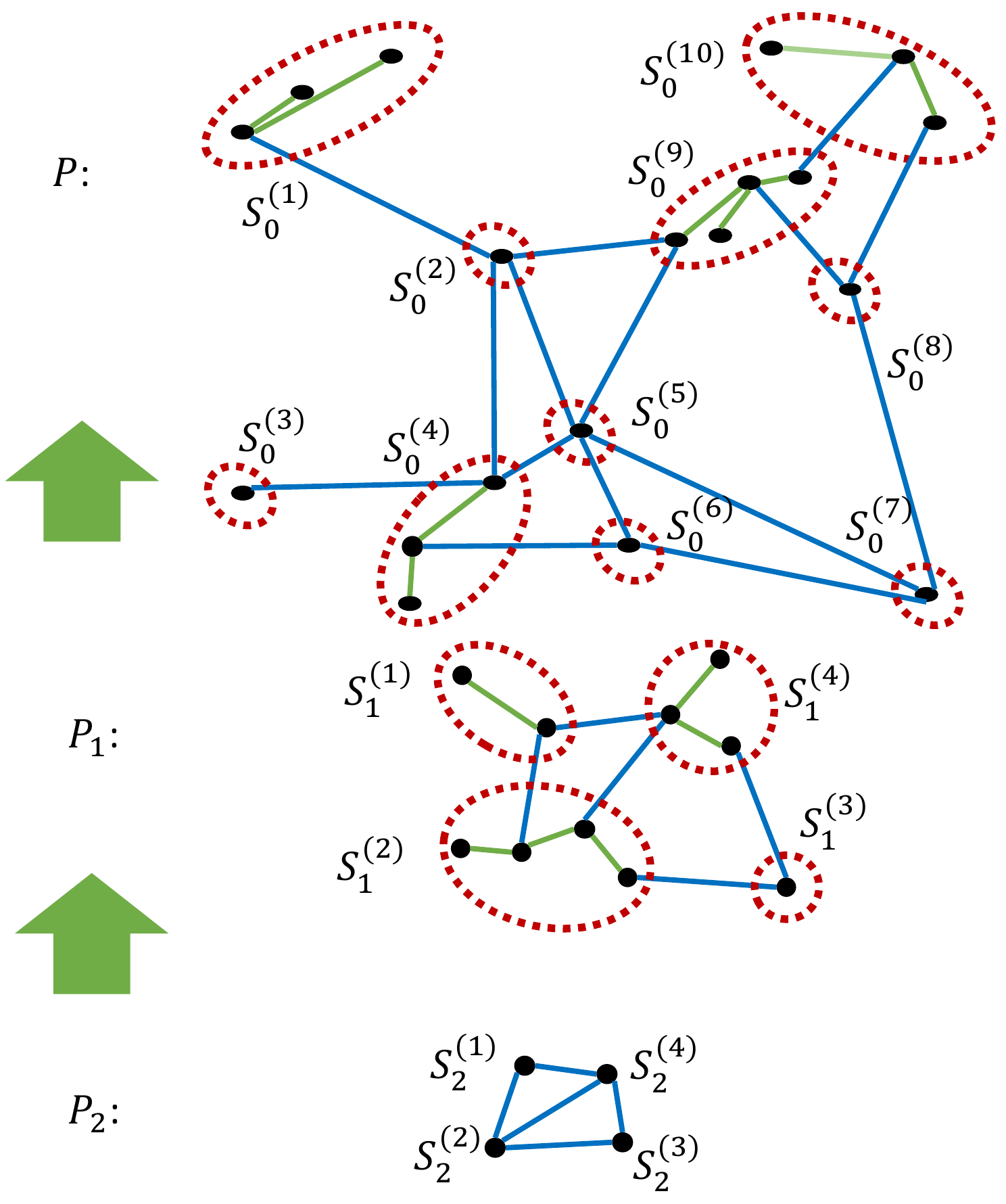}
  \caption{Graph  learning from bottom up. }\label{fig:backward}
\endminipage\hfill
\minipage{0.39074053\textwidth}%
  \includegraphics[width=\linewidth]{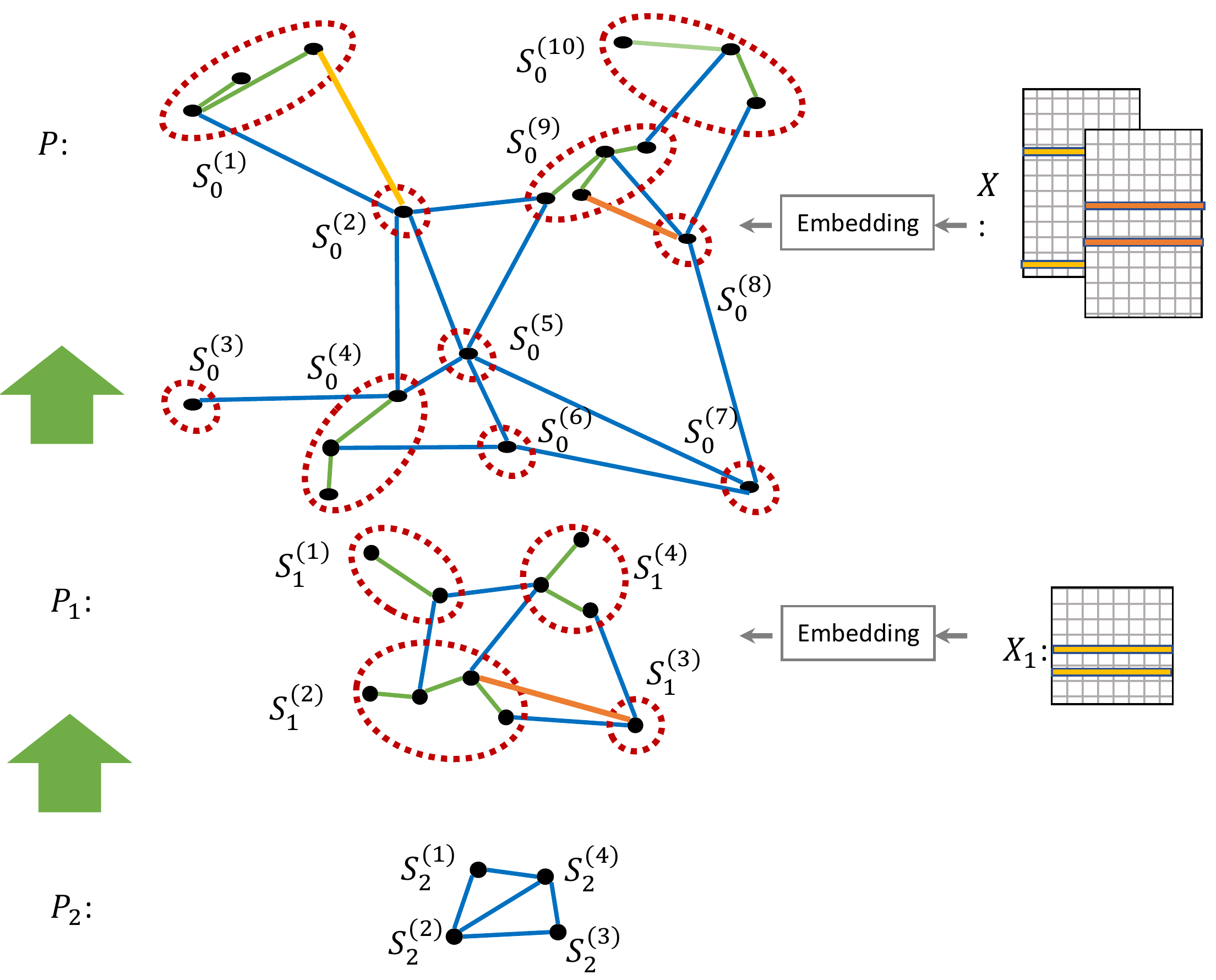}
  \caption{ Influential off-tree edge identification. }\label{fig:emebedding}
\endminipage
\end{figure*}
 

\textbf{Overview of SF-SGL.} In this work, we  extend SGL by introducing {a solver-free}, multilevel spectral graph learning scheme (SF-SGL).
The   key procedures in SF-SGL are described as follows: 
(1) given the measurement vectors, such as voltage and current matrices $X$ and $Y$,  a hierarchy of  coarse level kNN  graphs   will be constructed by exploiting a  scalable  spectral graph coarsening framework \cite{zhao2021wsdm}, as shown in Figure \ref{fig:coarsening}; (2) the SGL algorithm will be leveraged for graph learning starting from  the  coarsest level: an MST is first extracted, which is followed by a procedure for identifying spectrally-critical off-tree edges  (that belong to the coarsest kNN graph) with large embedding distortions that can be efficiently computed by exploiting a solver-free local spectral graph embedding scheme; (3) after  SGL converges at the coarsest level, the learned graph will be progressively mapped to the finer levels, as show in Figure \ref{fig:backward}, where additional candidate edges  (that belong to the coarsened kNN graphs) will be identified and included into the latest graphs with updated local  embedding vectors, as shown in Figure \ref{fig:emebedding}.  Once the learned graph is obtained, the proposed framework and learned graph can be efficiently applied in different tasks. One immediate application is to facilitate the vectorless power grid and thermal integrity verification, as shown in Figure \ref{fig:overview_sfsgl}.  

 In the rest of this paper, given an   initial kNN graph ${G}_{0}= {G}$, a series of coarsened kNN  graphs ${G}_1, {G}_2,...,{G}_{l_f}$ will be generated via the   spectral graph coarsening    \cite{zhao:dac19}, where ${G}_{l_f}$ denotes the coarsest kNN graph. The multilevel learned graphs are denoted by  ${P}_0, {P}_1,...,{P}_{l_f}$. For the sake of simplicity, all the symbols used in this paper are summarized in Table \ref{tab:symbols}.

\begin{center}
\begin {table}
\caption {Summary of symbols in the SF-SGL framework ($l=0,1,...,{l_f} , i=1,...,n_l$).} \label{tab:symbols}
\resizebox{\columnwidth}{!}{
\begin{tabular}{|c c|c c|} 
 \hline
 symbols & description & symbols & description \\  \hline
   $\Theta$ & precision matrix   & $\mathcal{C}$ & sample covariance matrix     \\ 
  ${H}_l$ & coarsening operator   & ${X}_l$ & feature matrix at level $l$    \\ 
   ${z}_{p,q}^{data}$ & data distance   & ${z}_{p,q}^{emb}$ &  embedding distance   \\ 
   ${s}_{p,q}$ & spectral sensitivity   & ${\eta}_{p,q}$ &   distortion   \\ 
 ${G}_l=({V}_l,{E}_l)$ & an undirected graph at level $l$ & ${P}_l=(V_l,E_l)$ & the learned graph\\ 
 ${\omega_{l}(p,q)}$ &  weight of edge $(p, q)$ for ${G}_l$ & $\tilde{\omega}_l(p,q)$ & weight of edge $(p, q)$ for ${P}_l$\\
 
 ${E}_l$ & edge set of ${G}_l$  & $\tilde{E}_l$ &  edge set of ${P}_l$\\
  $m_l=|{E}_l|$ & number of edges in ${G}_l$ & $\tilde{m}_l=|\tilde{E}_l|$ & number of edges in ${P}_l$ \\
 
${V}_l$ & node set at level $l$ & ${n}_l=|{V}_l|$ & number of nodes \\
 
 ${L}_{{G}_l}$ & Laplacian  of  graph ${G}_l$ &${L}_{P_l}$ & Laplacian of  graph ${P}_l$\\
  $u_{l}^{(i)}$ and $\tilde{u}_{l}^{(i)}$ & eigenvectors of  $L_{G_l}$ and $ L_{P_l}$&
 $\lambda_{l}^{(i)}$ and $\tilde{\lambda}_{l}^{(i)}$ & eigenvalues of of $L_{G_l}$ and $L_{P_l}$ \\
 \hline
\end{tabular}}
\end{table}
\end{center}



 
\subsection{Spectral Graph Coarsening
}\label{sec:sfgrass}
 \textbf{Node aggregation sets for graph coarsening.} {{As shown in Figure \ref{fig:coarsening},  coarsening the graph from $G_{l-1}$ to $G_l$ requires to cluster the node set $V_{l-1}$ of $G_{l-1}$  into $n_l$ different node aggregation sets $S^{(i)}_{l-1}$ with $i=1,..., n_l$. The subgraph $G_{l-1}[S^{(i)}_{l-1}]$ induced by each node aggregation set $S^{(i)}_{l-1}$ will be a strongly-connected component in graph $G_{l-1}$, which will be aggregated into a single node in the coarser graph $G_l$. Consequently, once the node aggregation sets are determined, a node-mapping matrix $H_{l}$ can be uniquely constructed accordingly, which   allows creating $G_{l}$ as follows:
\begin{equation}
 L_{G_{l}}:=H_{l}^{\mp}L_{G_{l-1}}H_{l}^{+},  \quad    x_{l}:=H_{l}x_{l-1},
\end{equation}
where ${H}_{l} \in \mathbb{R}^{n_{l}\times n_{l-1}}$,   $x_{l}\in \mathbb{R}^{n_l\times 1}$,  $l=1,...,l_f$, and $H_{l}^\top,H_{l}^+,H_{l}^\mp$ denote the transpose, pseudoinverse, and transposed pseudoinverse  of $H_{l}$, respectively.  $H_{l} $ and $H_{l}^+$ can be constructed as follows \cite{loukas2019graph}:
{\small\begin{equation}
    H_{l}(i,p)=\begin{cases}
\frac{1}{{|S^{(i)}_{l-1}|}} & \text{if p}  \in  S_{l-1}^{(i)} \\
0 & \text{otherwise } .
\end{cases}
\text{, }
H^+_{l}(p,i)=\begin{cases}
1 & \text{if p}  \in  S_{l-1}^{(i)} \\
0 & \text{otherwise. } 
\end{cases}
\end{equation}}

}}
\textbf{Spectral coarsening via local embedding.} The key to spectral coarsening in SF-SGL  is to determine the node aggregation sets. To this end, a low-pass graph filtering scheme has been adopted for   local spectral embedding, which can be achieved in linear time  \cite{zhao:dac19}. Let $b_{l}^{(k)}\in \mathbb{R}^{n_l\times 1}$  with $k=1,\cdots,K$ be the initial random vectors which are orthogonal to the all-one vector. After applying a few steps of Gaussian-Seidel relaxations for solving the linear system of equations $L_{G_{l}} b_{l}^{(k)}=0$ with $k = 1, \cdots, K$
 \cite{livne2012lean}, we can associate each node in the graph $G_l$ with a $K$-dimensional vector    using the embedding matrix  $B_l=[b_{l}^{(1)},...,b_{l}^{(K)}]$. Since each $b_{l}^{(k)}$ can be considered as the linear combinations of the first few Laplacian eigenvectors after smoothing (low-pass filtering), the embedding matrix $B_l$   can well approximate the eigensubspace matrix $U_r$ in (\ref{subspace}). Consequently, the nodes that are close to each other in the spectral embedding space  can be clustered into the same aggregation set. After finding all node  aggregation sets,   the node mapping matrix  $H_{l}$ that uniquely determines a coarse-level graph can be formed.
 Subsequently, the node feature matrix at level $l$ can also be obtained by $X_l=H_lX_{l-1}$, as shown in Figure \ref{fig:coarsening}. To better encode the feature information on the coarsened graph, we  propose the following scheme for updating edge weight between neighbouring nodes $s$ and $t$ based on their feature distance:
\begin{equation} 
    \omega_l(s,t)=1/\|X_l^\top e_{s,t}\|^2_2.
\end{equation}

\textbf{Bounding  approximation errors.} To ensure  the solution quality of the proposed graph learning framework, it is important to estimate and bound  the errors due to  spectral graph coarsening.  Graph ${{G}_{l}}$ is considered as a good approximation  of its finer version  ${G}_{l-1}$ if the following holds \cite{loukas2019graph}: 
\begin{equation}
\begin{split}
     \gamma_1 \lambda_{l-1}^{(i)}\leq
  \lambda_{l}^{(i)}\leq\gamma_2 {\small{\frac{(1+\epsilon)^2}{1-\tau\epsilon^2}}} \lambda_{l-1}^{(i)}    
 \\
     \frac{1}{\sigma_{l-1}}\|u^{(i)}_{l-1}\|_{L_{G_{l-1}}}\leq \|u^{(i)}_{l}\|_{L_{G_{l}}}\leq \sigma_{l-1}\|u^{(i)}_{l-1}\|_{L_{G_{l-1}}},
     \end{split}
\end{equation}
where  $l=1,...,l_f$, $\lambda_{l}^{(1)},..,\lambda_{l}^{(K)}$ denote the $K$ non-decreasing eigenvalues of $L_{G_{l}}$,    $  \tau=\lambda^{(K)}_{l-1}/\lambda^{(2)}_{l-1}$,  $\epsilon=(\sigma_{l-1}^2-1)/(\sigma_{l-1}^2+1)$ with $\sigma_{l-1}\leq (\frac{1+\sqrt{\tau}}{1-\sqrt{\tau}})^{\frac{1}{2}}$, and $\gamma_1 (\gamma_2$) denotes the smallest (largest) eigenvalue of $(H_lH_l^\top)^{-1}$.

\subsection{ Graph Topology Learning via A Bottom-up Approach}
\textbf{A spectrum decomposition perspective.} As shown in \cite{zhang2020sf}, the spectrally-coarsened graphs will carry different  (spectrum) bandwidths of   the original graph $G$. For example, the coarsest graph Laplacian $L_{G_{l_f}}$ will   only preserve   the   key spectral  properties of ${G_{0}}$, such as the first few  Laplacian eigenvalues and eigenvectors, whereas the Laplacians of the increasingly finer graphs   will  match the lower to moderate eigenvalues of $L_{G_{0}}$.   Consequently, {the coarsened graphs can be considered as a cascade of low-pass graph filters} with gradually decreasing bandwidths: the finest graph retains the highest bandwidth, whereas the coarsest graph   retains the lowest bandwidth.

\textbf{Mapping the learned graphs to finer levels.}  SF-SGL strives to {progressively form a series of increasingly finer  graphs such that the spectral embedding distortions can be significantly mitigated}.  As shown in Figure \ref{fig:backward}, to map the learned graph $P_l$ to a   finer level, two different types of edges  will be formed for constructing the  graph $P_{l-1}$ at the $(l-1)$-th level: (1) any node in $P_{l}$  corresponds to a node aggregation set in $P_{l-1}$, in which  an MST will be extracted to form  the \textbf{inner-cluster edges}; (2) any edge in   $P_{l}$  corresponds to at least one edge connecting between the two node aggregation sets in $P_{l-1}$, while only the edge with the largest weight will be kept as the \textbf{inter-cluster edge} for constructing $P_{l-1}$.

\textbf{Spectrally-critical edges at coarse levels.} As shown in Figure \ref{fig:emebedding}, in order to  identify and include spectrally-critical off-subgraph edges with large spectral embedding distortions, the aforementioned local spectral embedding will be applied for graph $P_{l-1}$ to generate its embedding matrix $B'_{l-1}=[b_{l-1}^{(1)},\cdots,b_{l-1}^{(K)}]$.
The spectral embedding distortion of the coarse level graph $P_{l-1}$ can be computed as

\begin{equation}
    \eta_{s,t}={z_{s,t}^{emb}}/{z_{s,t}^{data}},
\end{equation}
where $z_{s,t}^{emb}=\|B_{l-1}^{'\top} e_{s,t}\|^2_2$ and $z^{data}_{s,t} =\| X_{l-1}^\top e_{s,t}\|^2_2$.
A larger embedding distortion of an off-subgraph edge indicates that including this edge into the learned graph will more precisely encode the distances between the data points.
Note that the edges identified by SF-SGL on the coarsest graph will tend to connect the most distant nodes for drastically mitigating long-range spectral embedding distortions, thereby impacting  only the first few Laplacian  eigenvalues (low-frequency band); on the other hand, the edges identified  on the finest  graph   will tend to only connect   local nodes for mitigating short-range embedding distortions, thereby impacting relatively large    eigenvalues (high-frequency band).

\subsection{Algorithm Flow and Complexity of SF-SGL }\label{main:complexity}
The proposed multilevel graph learning framework (SF-SGL)   allows a flexible decomposition of the entire graph spectrum (to be learned)  into multiple different eigenvalue clusters, such that the most spectrally-critical edges that are the key to mitigating  various ranges of spectral embedding distortions can be more efficiently identified.
Algorithm \ref{alg:directed_graph_spar} shows the algorithm flow for the proposed SF-SGL framework. All the aforementioned steps in SF-SGL can be accomplished in nearly-linear time by leveraging recent high performance algorithms for kNN graph construction \cite{malkov2018efficient}, solver-free spectral graph coarsening and embedding, as well as fast Laplacian solver. Consequently, SF-SGL can be accomplished in nearly-linear time.

\begin{algorithm}[!htbp]
\small { \caption{The SF-SGL algorithm flow.} \label{alg:directed_graph_spar}
\begin{flushleft}
\hspace*{\algorithmicindent} \textbf{Input:} Voltage and current measurement matrices: $X \textit{, } Y\in \mathbb{R}^{N\times M}$, $k$ for initial kNN graph construction;\\
\hspace*{\algorithmicindent} \textbf{Output:} Learned graph $P$;
\end{flushleft}
    \begin{algorithmic}[1]

    \STATE{$P_{l} = \emptyset$ for $l = 0, ..., l_f$};\\
    \STATE{Construct an initial kNN graph $G$ as $G_0$ based on X};\\
     \STATE{ Perform spectral graph coarsening to get a series of coarsened kNN graphs  ${G_1},...,{G_{l_f}}$ as well as  the coarsening operators $H_1,...,H_{l_f}$ };\\
     \STATE{Let $X_0=X$ and calculate the feature matrices ${X_1},...,{X_{l_f}}$ by $X_{l}=H X_{l-1} $ for $l = 1, ..., l_f$; }\\
     \STATE{ Set $l = l_f$ and learn a sparse graph $P_{l}$  through SGL iterations; }\\
    \WHILE{ $ l\ge1$}
        \FOR{each node $i \in V_l$}
         \STATE{Find the induced subgraph $G_{l-1}[S_{l-1}^{(i)}]$ w.r.t. the aggregation set $S_{l-1}^{(i)}$; } \\
        \STATE{Extract the MST of the induced subgraph $G_{l-1}[S_{l-1}^{(i)}]$ and add its edges into  $P_{l-1}$ };
    
        \ENDFOR 
        \FOR{each edge $(s,t) \in \tilde{E}_l$}
            \STATE{Add the edge with maximum weight between sets $S_{l-1}^{(s)}$ and $S_{l-1}^{(t)}$  into graph $P_{l-1}$} ;
       
        \ENDFOR 
        
        \STATE{Compute   $\eta_{s,t}$ for each candidate edge $(s,t) \in E_{l-1}\setminus \tilde{E}_{l-1}$};
        
        \STATE{Add the candidate edges with large $\eta_{s,t}$ values into $P_{l-1}$};
    \STATE{$l=l-1$}
 
    \ENDWHILE\\
     \STATE{Scale up the edge weights in $P_0$ and return the learned graph ${P}$}.
    \end{algorithmic}   }
\end{algorithm}
\section{Experimental Results}\label{result_sec}
The proposed SGL and SF-SGL algorithms have been implemented in Matlab and $C$++. The test cases  in this paper have been selected from a great variety of matrices that have been used in circuit simulation and finite element analysis problems. Note that the prior state-of-the-art graph learning algorithms \cite{egilmez2017graph} can take excessively long time even for very small graphs.  For example, learning the   test case ``airfoil" graph with $4,253$ nodes   using the CGL algorithm \cite{egilmez2017graph} can not be completed after $12$ hours,   whereas  the proposed SGL algorithm   only takes about $4$ seconds to achieve a high-quality solution;   learning the ``yaleShieldBig" graph   with $1,059$ nodes using   CGL       takes $1,235$ seconds, whereas   SGL   only takes about $2$ seconds (over $600\times$ speedups) for achieving similar  solution (spectrum preservation) quality. Therefore, in this work we will only compare with the graph construction method based on the standard kNN algorithm. All of our experiments have been conducted using a single CPU core of a computing platform running 64-bit RHEW 7.2 with a $2.67$GHz 12-core CPU and $50$ GB memory. 

\vspace{-0pt}

\subsection{Experimental Setup}
To create the linear voltage and current measurements, the following procedure has  been applied: (1) we first randomly generate $M$ current source vectors with each element sampled from a standard normal distribution; (2) each current vector will be normalized and orthogonalized to the all-one vector; (3) $M$ voltage vector measurements will be collected by solving the original graph Laplacian matrix  with the $M$ current vectors as the (right-hand-side) input vectors; (4)  the voltage and current vectors will be stored in data matrices $X=[x_1,...,x_M]$ and $Y=[y_1,...,y_M] \in {\mathbb{R} ^{N\times M}}$, respectively, which will be used as the input data of the proposed SGL and SF-SGL algorithms. By default, $M=50$ is used for generating the voltage and current measurements. By default, we choose $k=5$ for constructing the kNN graph for all test cases in SGL, and set $r=5$ for constructing the spectral embedding  matrix in (\ref{subspace}). The edge sampling ratio  $\beta=10^{-3}$  has been used. The SGL iterations will be terminated if   $s_{max}<tol=10^{-12}$. When approximately computing the objective function value (\ref{opt2}), the first $50$ nonzero Laplacian eigenvalues are used. 

To  visualize the structure of each graph, the spectral graph drawing technique has been adopted \cite{koren2003spectral}: to generate the 2D   graph   layouts, each entry of the first two nontrivial Laplacian eigenvectors ($u_2, u_3$) will be used as  the $x$ or $y$   coordinate for embedding each node (data point), respectively. We also assign  the nodes with the same   color if they belong to the same node cluster determined by spectral graph clustering \cite{zhao2021wsdm}. 
\vspace{-0pt}

\subsection{Comprehensive Results for Graph Learning with SGL} 
 
   \textbf{Algorithm convergence.} As shown in Figure \ref{fig:distRes}, for the ``2D mesh" graph ($|V|=10,000$, $|E|=20,000$) learning task, SGL  requires about $40$ iterations to converge to $s_{max}\le 10^{-12}$ when starting from an initial  MST  of a 5NN graph.  
  \begin{figure}
     \centering\includegraphics[width=0.795\linewidth]{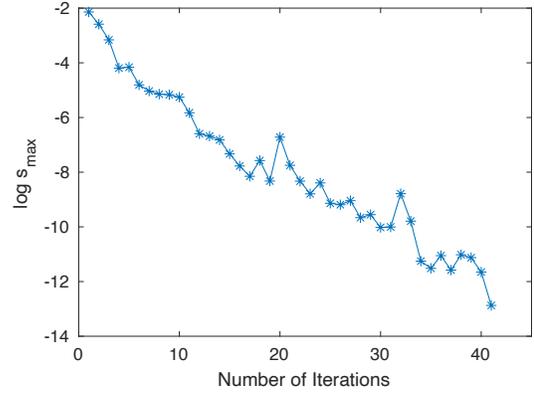}
  \caption{The decreasing maximum sensitivities (``2D mesh" graph). }\label{fig:distRes}
  \end{figure}
  
    \begin{figure}
   \centering\includegraphics[width=0.7985\linewidth]{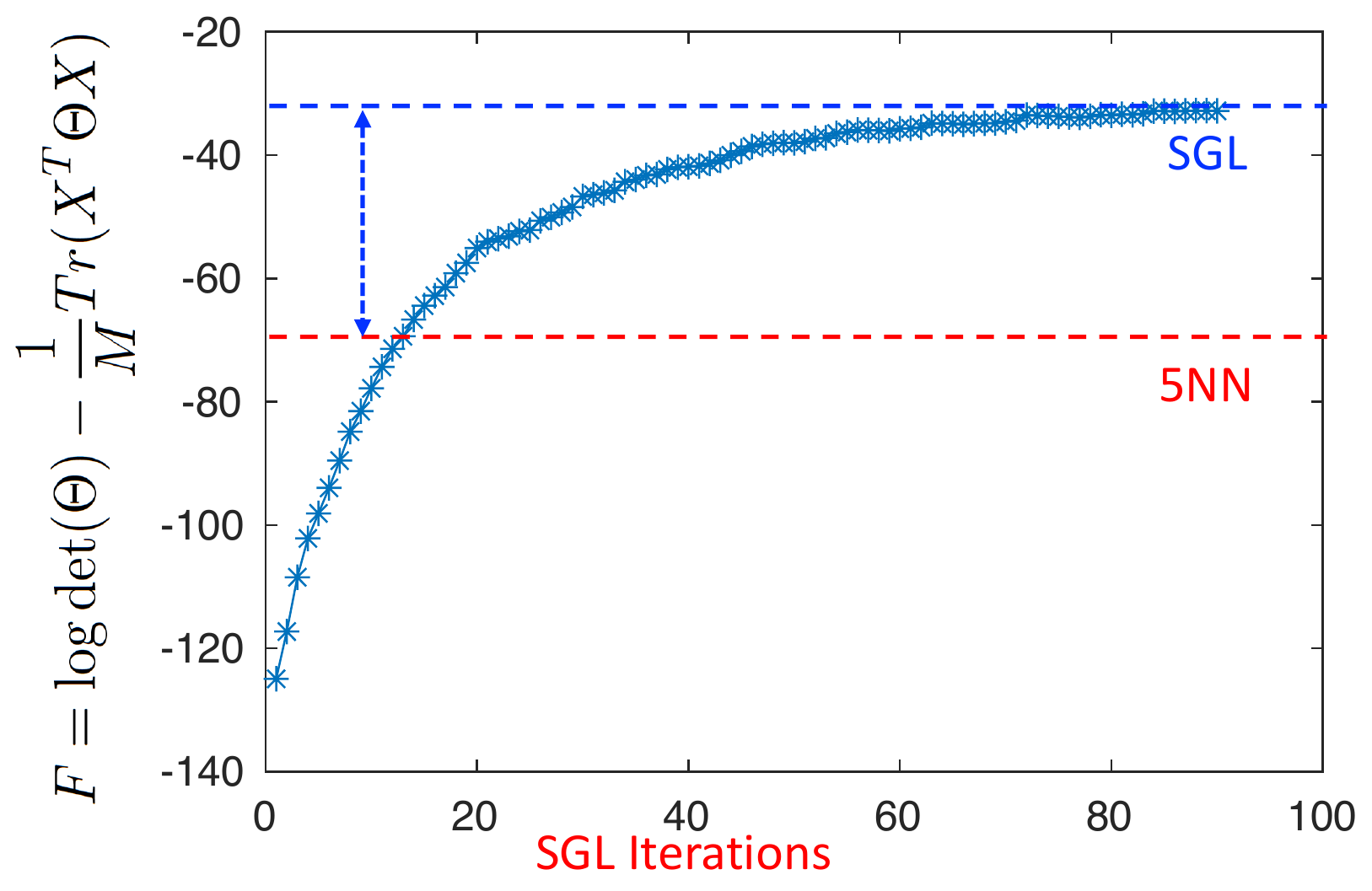}
  \caption{The objective function values (``fe\_4elt2" graph). }\label{fig:compKNN}
  \end{figure}

   \textbf{Comparison with kNN graph.} As shown in Figure \ref{fig:compKNN}, for the ``fe\_4elt2" graph ($|V|=11,143$, $|E|=32,818$) learning task, SGL  converges in about  $90$ iterations when starting from an initial  MST of a 5NN graph. For the 5NN graph, we do the same spectral edge   scaling. As shown in Figures \ref{fig:compKNN} and  \ref{fig:compKNN2}, the SGL-learned graph achieves a more optimal objective function value  and a much better  spectral approximation   than the 5NN graph. As observed, the SGL-learned graph has a density similar to a spanning tree, which is much sparser than the 5NN graph.
   \begin{figure}[!htb]
  \centering\includegraphics[width=0.995\linewidth]{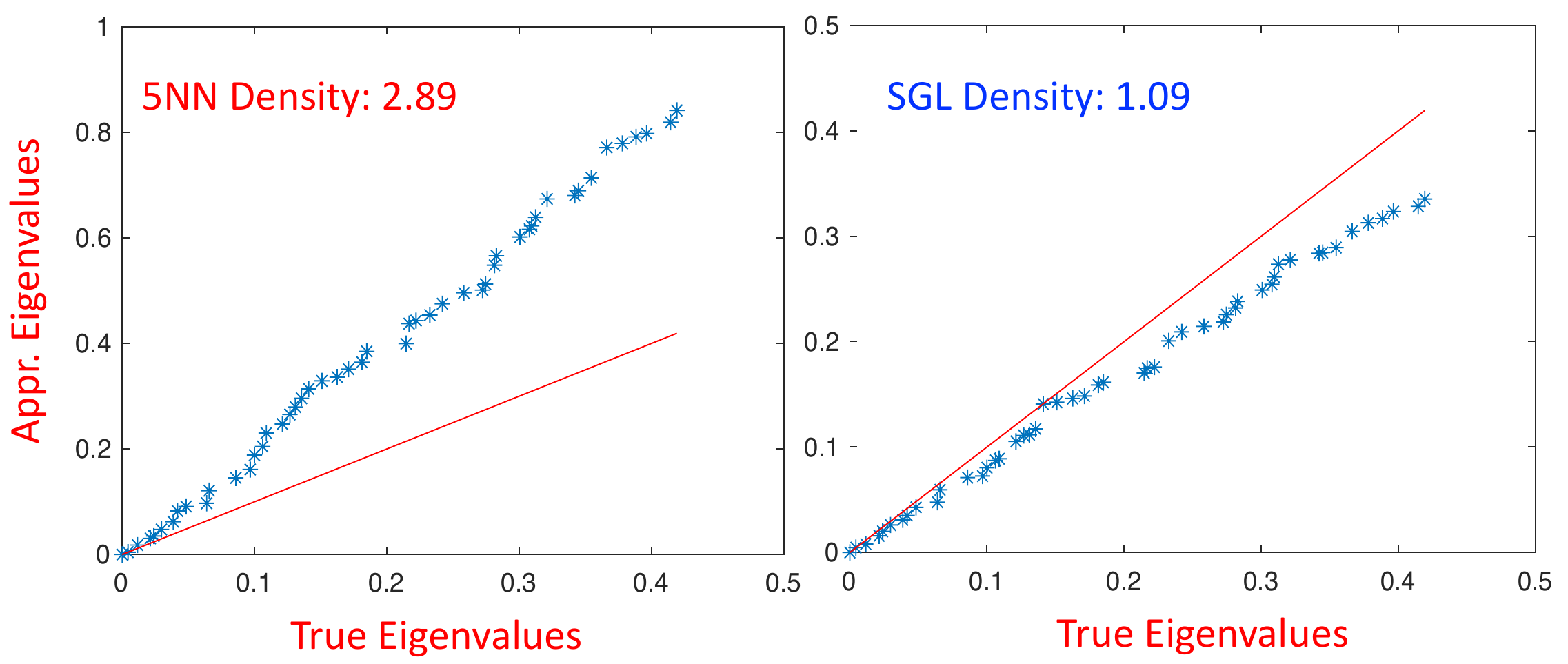}
  \caption{The comparison with a 5NN graph (``fe\_4elt2" graph)}\label{fig:compKNN2}
  \end{figure}
  
\textbf{Learning circuit networks.} As shown in Figures \ref{fig:sgl_airfoil}, \ref{fig:sgl_crack} and \ref{fig:G2}  for the ``airfoil" ($|V|=4,253$, $|E|=12,289$), the ``crack" ($|V|=10,240$, $|E|=30,380$),  
and  the ``G2\_circuit" ($|V|=150,102$, $|E|=288,286$) graphs,   SGL   can consistently learn     sparse graphs which are slightly denser than  spanning trees while preserving the key graph spectral  properties. In Figure \ref{fig:resistance}, we observe highly correlated results when comparing the  effective resistances computed on the original  graphs with the  graphs learned by SGL.

\textbf{Learning reduced networks.} As shown in Figure \ref{fig:G2Redu}, by randomly choosing a small portion of node voltage measurements (without using any currents measurements in $Y$ matrix) for graph learning, SGL can learn  spectrally-similar graphs of much smaller sizes: when $20\%$ and  $10\%$  node voltage measurements are used for graph learning, $5\times$ and $10\times$ smaller resistor networks can be constructed, respectively, while preserving the key spectral (structural) properties of the original graph.

  \begin{figure}
  \includegraphics[width=0.9995\linewidth]{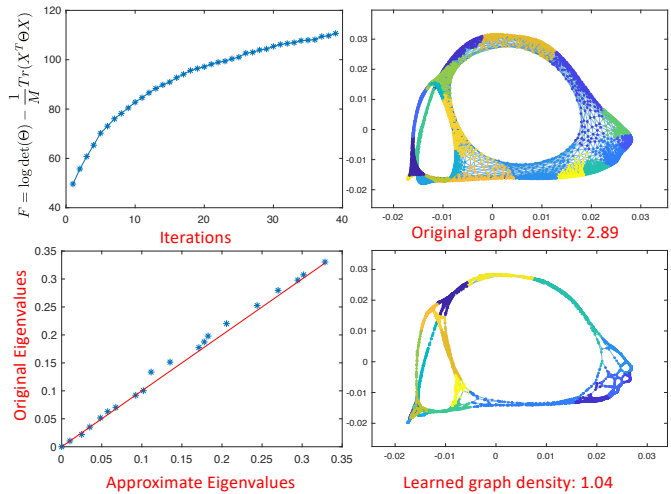}
  \caption{The results for learning the ``airfoil" graph.}\label{fig:sgl_airfoil}
  \end{figure}
  
    \begin{figure}
  \includegraphics[width=0.995\linewidth]{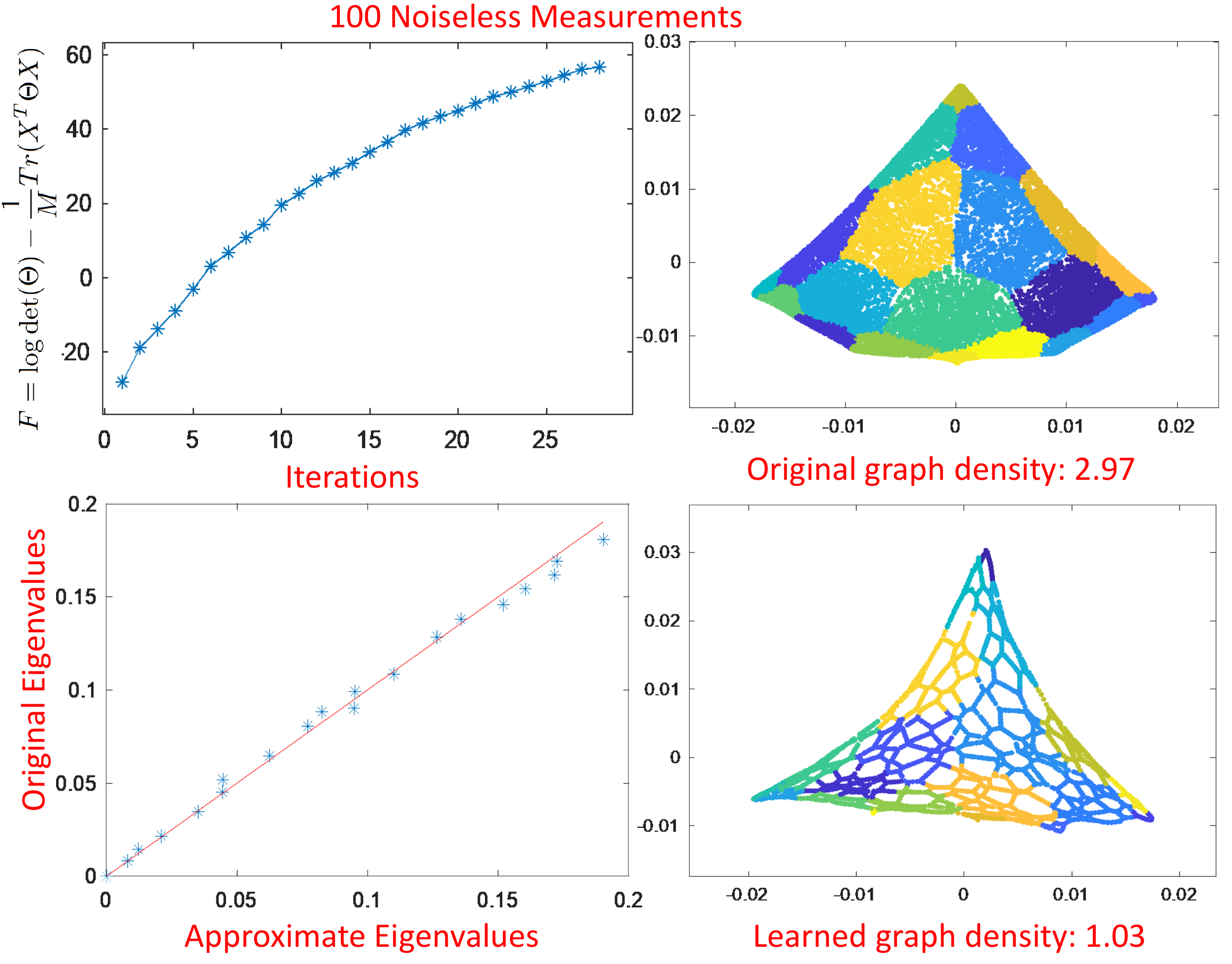}
  \caption{The results for learning the ``crack" graph.}\label{fig:sgl_crack}
  \end{figure}
  
   \begin{figure}
  \includegraphics[width=0.9995\linewidth]{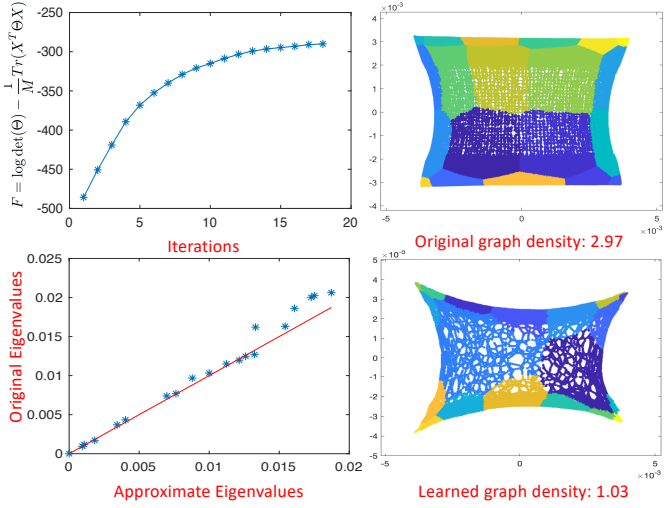}
  \caption{The results for learning the ``G2\_circuit" graph.}\label{fig:G2}
  \end{figure}

    \begin{figure}
  \includegraphics[width=0.9995\linewidth]{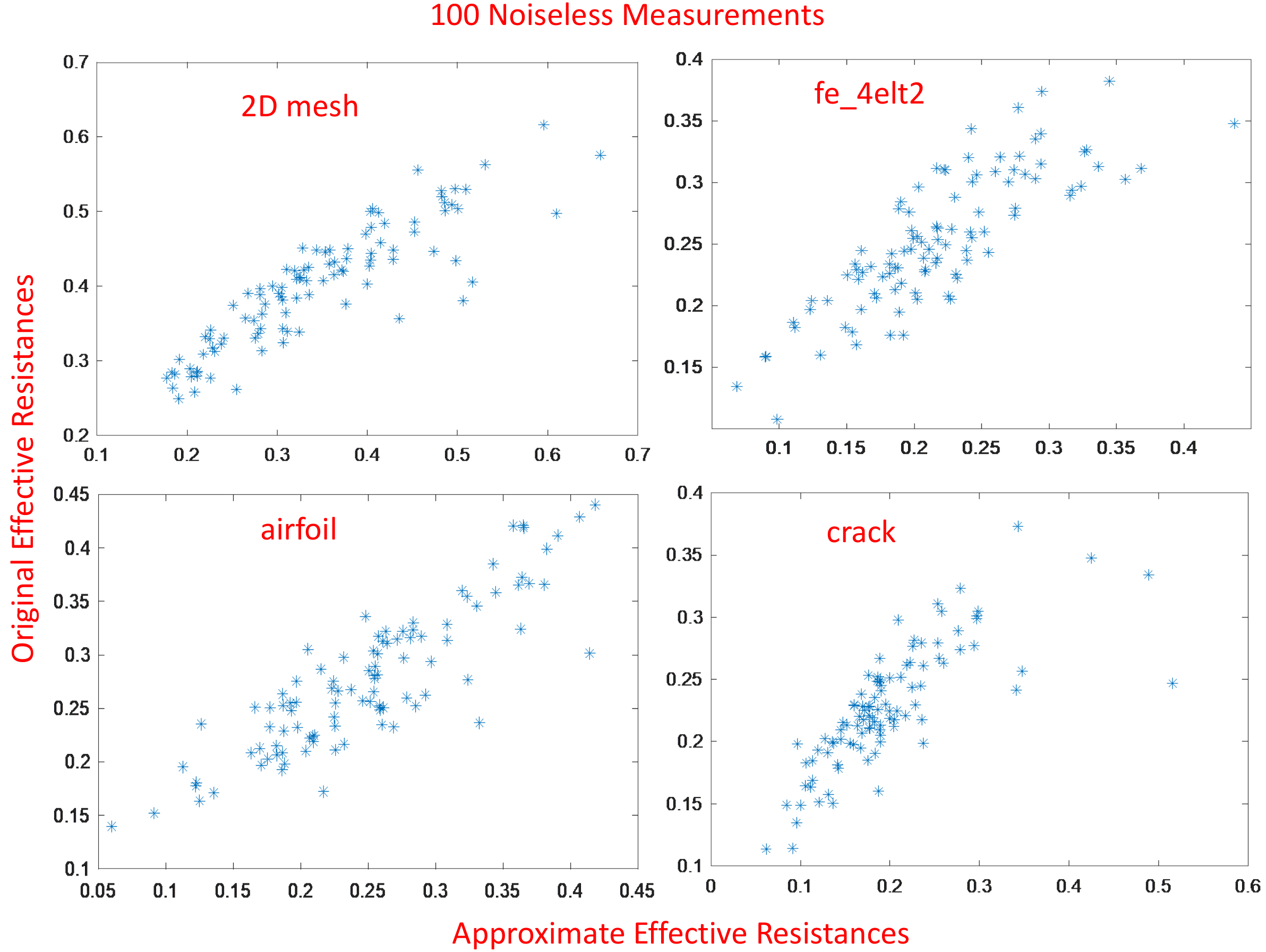}
  \caption{The effective resistances correlations (scatter plots). }\label{fig:resistance}
  \end{figure}
  
  \begin{figure}
  \includegraphics[width=0.999\linewidth]{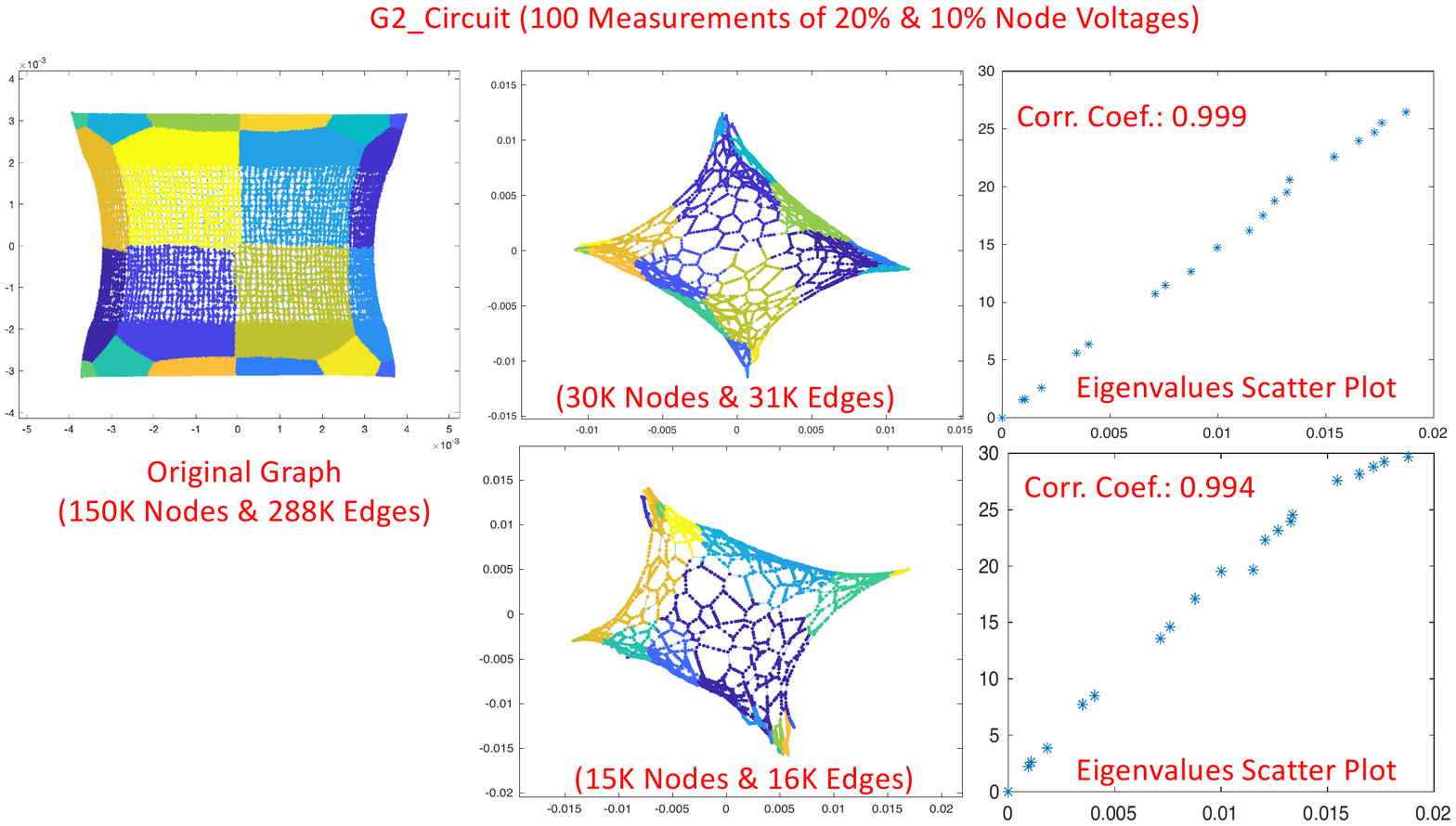}
  \caption{The reduced graphs learned by SGL (``G2\_circuit" ).}\label{fig:G2Redu}
  \end{figure}
  
\textbf{Learning with noisy measurements.} We show the results of the ``2D mesh" graph learning  with noisy voltage measurements. For each SGL graph learning task, each input voltage measurement (vector) $\tilde x$ will be computed by: $\tilde x= x+ \zeta\| x\|_2\epsilon$, where $\epsilon$ denotes a normalized Gaussian noise vector, and $\zeta$ denotes the noise level. As shown in Figure \ref{fig:MeshNoise}, the increasing  noise levels will result in worse approximations of the original   spectral properties. It is also observed that even with a very significant noise level of $\zeta=0.5$, the graph learned by the proposed SGL algorithm can still preserve the first few Laplacian eigenvalues that are key to the graph structural (global) properties.

   \begin{figure}
  \includegraphics[width=0.995\linewidth]{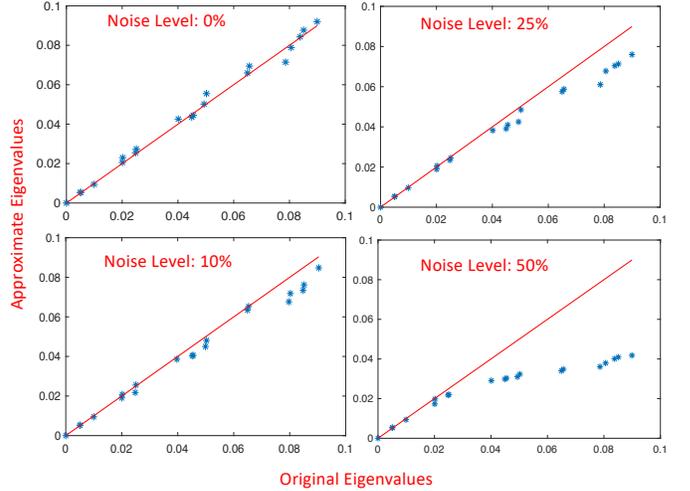}
  \caption{The graphs learned with noises (``2D mesh" graph).}\label{fig:MeshNoise}
  \end{figure}
  \begin{figure}
  \includegraphics[width=0.9995\linewidth]{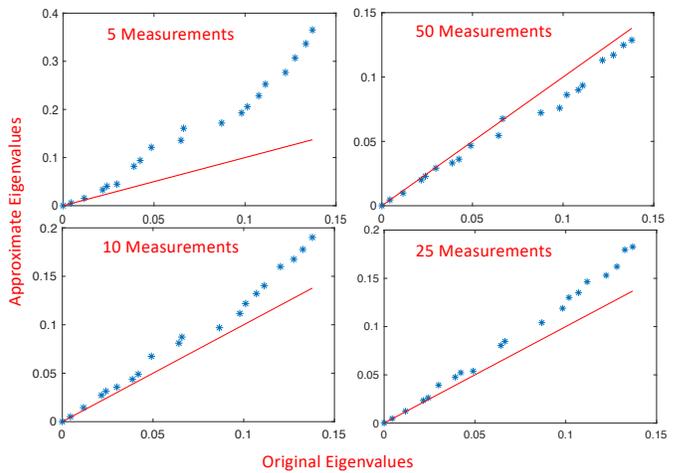}
  \caption{The effect of the number of measurements (``fe\_4elt2" graph). }\label{fig:sampleComplexity}
  \end{figure}
     \begin{figure}
   \centering\includegraphics[width=0.78259835\linewidth]{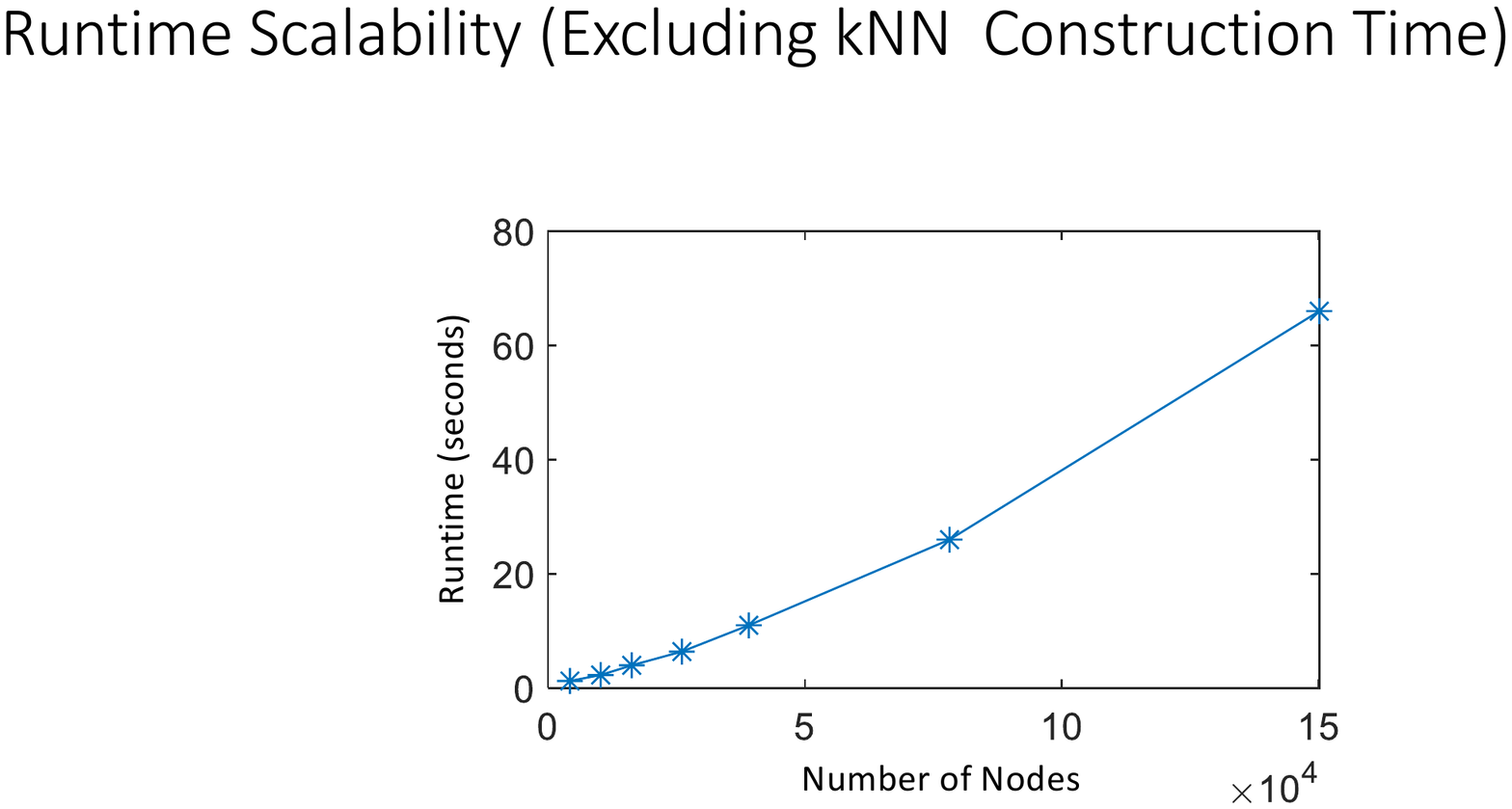}
    \vspace{-0pt}
  \caption{The runtime scalability of the SGL algorithm.}\label{fig:sgl_runtime}
  \vspace{-5pt}
  \end{figure}
\textbf{Sample complexity and runtime scalability} Figure \ref{fig:sampleComplexity} shows how the sample complexity (number of measurements)
may impact the graph learning quality. As observed, with increasing number of samples (measurements), substantially improved approximation of the  graph spectral    properties can be achieved. In the last, we show the runtime scalability of the proposed SGL algorithm in Figure \ref{fig:sgl_runtime}. The runtime includes the total time of Step 2 to Step 5 but does not include the time for Step 1. Note that  modern kNN algorithms can achieve highly scalable runtime performance \cite{malkov2018efficient}. 
    
       \begin{figure}
  \centering\includegraphics[width=0.7595\linewidth]{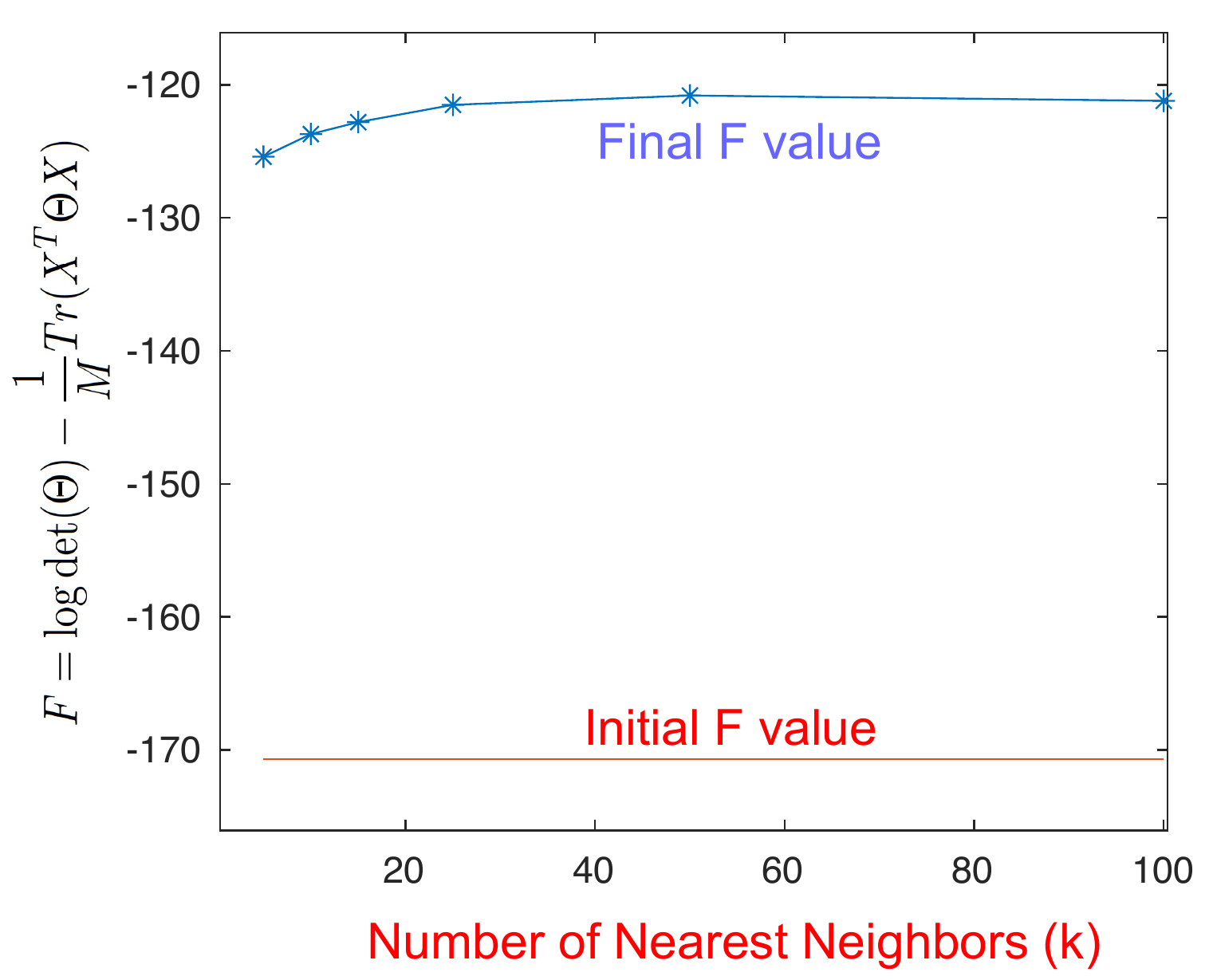}
  \caption{The final object function values achieved with various kNN graphs.}\label{fig:objknn}
  \end{figure}
  
        \begin{figure}
  \centering\includegraphics[width=0.8\linewidth]{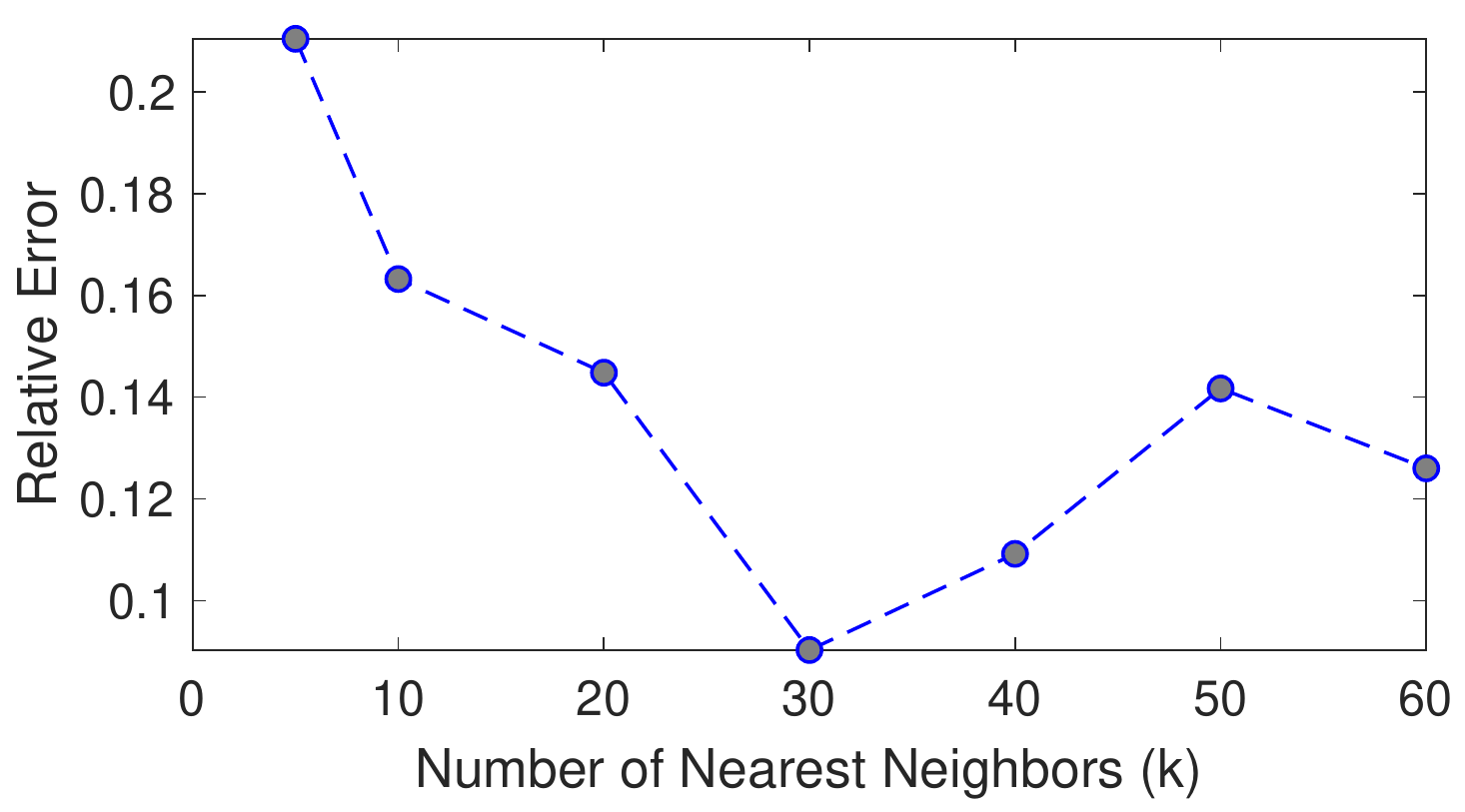}
  {\caption{The average relative errors of the first 50 eigenvalues between learned graph and original graph with different $k$ (the ``fe$\_$4elt" graph).}\label{fig:errknn}}
  \end{figure}
      
\textbf{Ablation analysis of SGL.} We provide comprehensive ablation analysis for the proposed SGL algorithm. In Figure \ref{fig:objknn}, we show that different choices of the number of nearest neighbors for constructing kNN graphs will only slightly impact the final solution quality (objective function values) of SGL. {{Figure \ref{fig:errknn} shows the average relative errors of the first $50$ eigenvalues between the learned graph and the original graph with various number of the nearest neighbors $k$. For example, given the first $50$ pairs of learned graph eigenvalues $\tilde{\lambda}_i$ and the original graph eigenvalues $\lambda_i$, the average relative error can be computed as $\frac{1}{50}\sum_{i=1}^{50}{\frac{|\tilde{\lambda}_i-\lambda_i|}{\lambda_i}}$.  It can be observed that the learned graphs can well preserve the spectral properties of the original graph with different $k$ for initial kNN graphs.  }} In Figure \ref{fig:embdim}, it is observed that with a higher dimension $r$ for spectral graph embedding, more accurate preservation of Laplacian eigenvalues in the learned networks can be achieved. In Figure \ref{fig:edgeiter}, we show that adding more edges in each SGL iteration will substantially improve the overall graph learning runtime without  compromising the solution quality.  
   
      \begin{figure}
  \centering\includegraphics[width=0.995\linewidth]{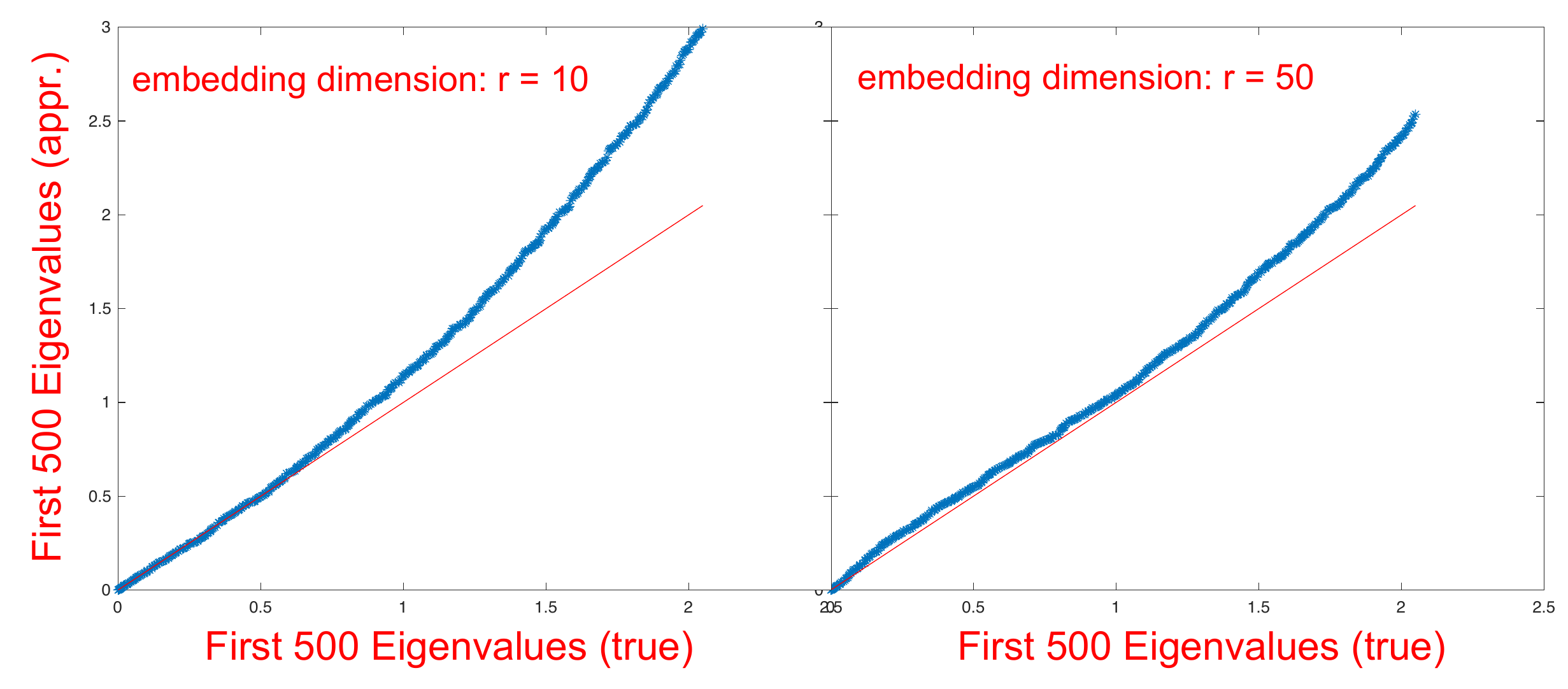}
  \caption{SGL results with different spectral embedding dimensions $r$.}\label{fig:embdim}
  \end{figure}
      \begin{figure}
  \centering\includegraphics[width=0.995\linewidth]{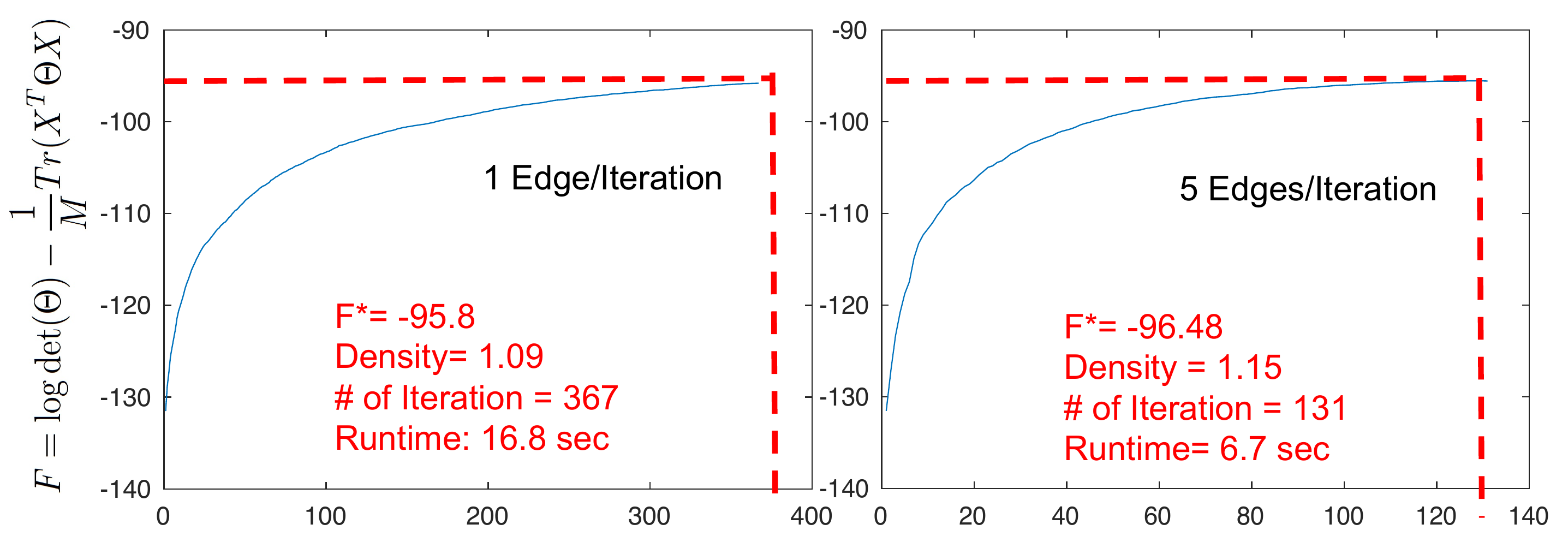}
  \caption{SGL iterations with different number of edges added per iteration.}\label{fig:edgeiter}
  \end{figure}
\subsection{Comprehensive Results for Graph Learning with SF-SGL}\label{subsection:SF-SGL}
\begin{figure}
\centering
  \includegraphics[width=\linewidth]{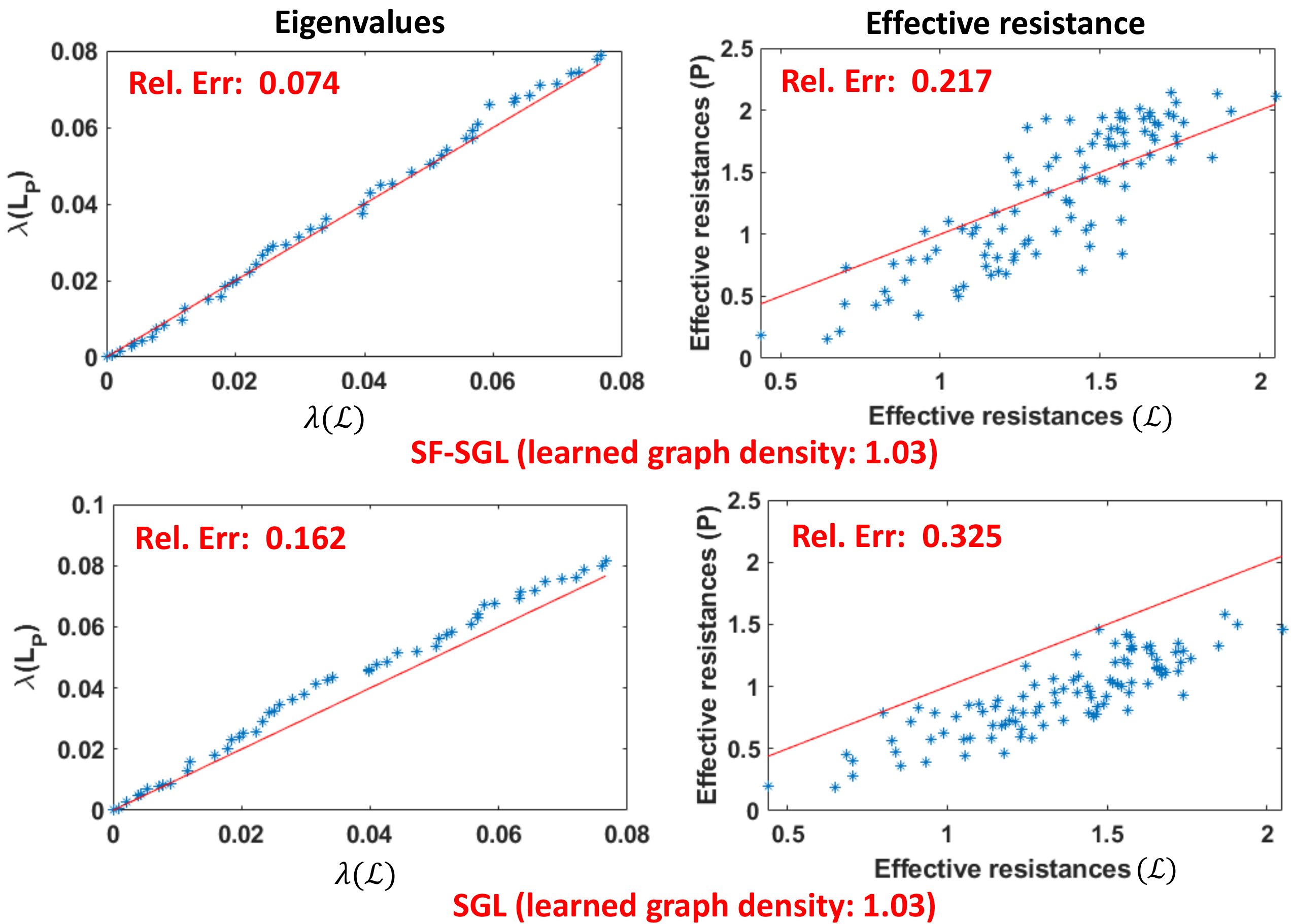}
  \caption{SF-SGL vs SGL on learned graphs (``fe$\_$4elt").}
  \label{fig:4elt}
\end{figure}

\textbf{Spectrum/resistance preservation.} Figure \ref{fig:4elt} shows the learned graph comparison using the proposed SF-SGL and SGL algorithms on graph ``fe$\_$4elt", where the first $50$ eigenvalues are compared between the learned graphs and the original graph. Meanwhile, we randomly sample $100$ pairs of the nodes to compute their effective resistances on the learned graphs and the original graph. $\textbf{Rel. Err}$ represents the average relative errors given $k$ pairs of the approximated solution $\tilde{a}$ and the true solution $a$, such that $\textbf{Rel. Err} = \frac{1}{k}\sum_{i=1}^k{\frac{|\tilde{a}-a|}{a}}$. Under this condition, $k$ will be $50$ for eigenvalue comparison and $100$ for effective resistance comparison. It can be observed that the graph learned with SF-SGL can better preserve the spectral properties of the original graph.

\begin{figure}
\centering
 \includegraphics[width=\linewidth]{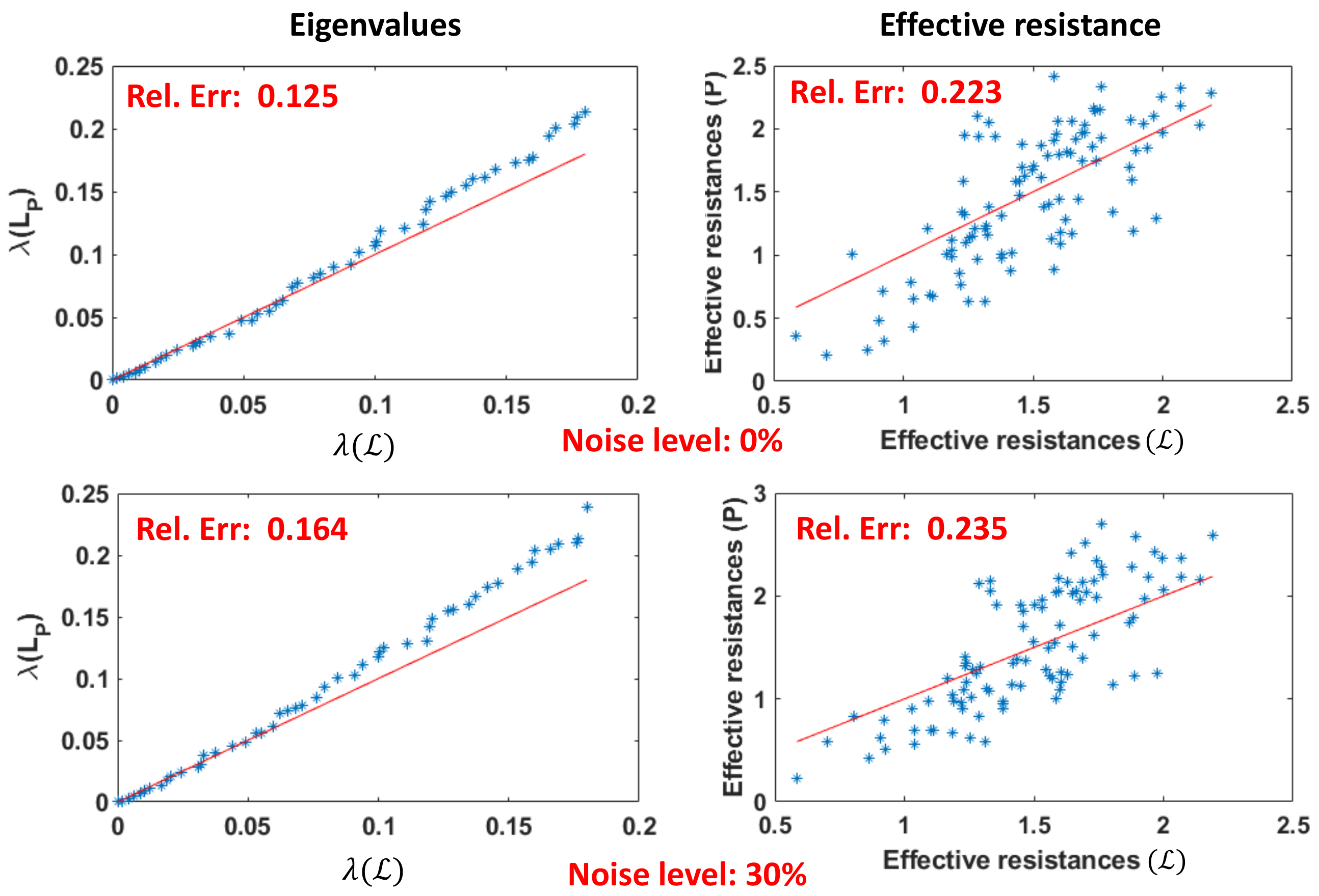}
  \caption{Learned graphs with noise (the ``airfoil" graph).}
  \label{fig:airfoil}
\end{figure}

\textbf{Learning with noises.} Figure \ref{fig:airfoil} shows the results of the ``airfoil" graph learning with noisy voltage measurements, where graph density of the learned graph is $1.04$.   We   observe that the measurement noises will result in worse approximations of the original spectral graph properties. However, the learned graph can still well preserve the key spectral properties of the original graph even with $30\%$ of noise level in the measurement data.

\begin{table*}
 \centering \caption{Comparison of spectral graph learning results between SF-SGL and SGL. }
 \resizebox{\textwidth}{!}{
 \begin{tabular}{|c|c|c|c|c|c|c|c|c|c|c|c|c|}
  \hline
 \multirow{2}{*}{Test cases} & {\multirow{2}{*}{$|V|$}} & {\multirow{2}{*}{$|E|$}} & 
 {\multirow{2}{*}{$\tau_G$}} &
 {\multirow{2}{*}{$\tau_P$}} &\multicolumn{4}{|c|}{SGL } & \multicolumn{4}{|c|}{SF-SGL }\\
\cline{6-13}
 { } & { } & { } & { } & { }& {$Err(\lambda)$} & {$Err(R)$} & {$T_P$} & {$T$} & {$Err(\lambda)$} & {$Err(R)$} & $T_P$ & {$T$}\\
  \hline
  {airfoil} & {$4.3E3$} & {$1.2E4$} & {$2.89$} & {$1.04$} & {$0.228$} & {$0.296$} & {$5.5s$} & {$7.5s$} & {$\mathbf{0.125} $} & {$\mathbf{0.223}$} & {$0.2s$ $(27X)$} & {$0.3s$ $(25X)$} \\
  \hline
  {fe$\_$4elt} & {$1.1E4$} & {$3.3E4$} & {$2.97$} & {$1.03$} & {$0.162$} & {$0.325$} & {$10.6s$} & {$16.8s$} & {$\mathbf{0.074}$} & {$\mathbf{0.217}$} & {$0.5s$ $(21X)$}& {$0.7s$ $(24X)$} \\
  \hline
  {crack} & {$1.0E4$} & {$3.0E4$} & {$2.97$} & {$1.04$} & {$\mathbf{0.049}$} & {$0.271$} & {$10.1s$} & {$15.8s$} & {$0.176$} & {$\mathbf{0.141}$} & {$0.7s$ $(14X)$} & {$0.9s$ ($17X$)} \\
    \hline
 {fe$\_$sphere} & {$1.6E4$} & {$4.9E4$} & {$3.00$} & {$1.05$} & {$0.084$} & {$0.243$} & {$19.5s$} & {$29.9s$} & {$\mathbf{0.078}$} & {$\mathbf{0.163}$} & {$1.3s$ $(15X)$}& {$1.8s$ $(16X)$} \\
  \hline
  {2Dmesh} & {$2.5E5$} & {$5.0E5$} & {$2.00$} & {$1.02$} & {$0.136$} & {$0.263$} & {$356.7s$ } & {$2,464.7s$} & {$\mathbf{0.062}$} & {$\mathbf{0.119}$} & {$21.3s$ ($17X$)} & {$31.6s$ ($78X$)} \\
  \hline
  {3Dmesh} & {$2.5E5$} & {$7.4E5$} & {$2.95$} & {$1.12$} & {$0.207$} & {$0.577$} & {$1,054.7s$ } & {$3,155.7s$} & {$\mathbf{0.125}$} & {$\mathbf{0.260}$} & {$36.8s$ ($28X$)} & {$47.8s$ ($66X$)} \\
   \hline
 {mat3} & {$4.0E5$} & {$1.2E6$} & {$2.89$} & {$1.13$} & {-} & {-} & {-} & {-} & {$0.236$} & {$0.350$} & {$67.7s$} & {$86.7s$} \\

  \hline\end{tabular}\label{table:sf_sgl}
  }
 \end{table*}
 
\textbf{More detailed  results.} Table \ref{table:sf_sgl} shows more comprehensive results on learned graphs using the proposed SF-SGL and SGL algorithms, where $\tau_G$ and $\tau_P$ represent the graph densities ($|E|/|V|$) for the original graph and the learned graph, respectively; $Err(\lambda)$ denotes the average relative error of the first $50$ eigenvalues between the original graph and the learned graph, and $Err(R)$ is the average relative error of the effective resistances measured on $100$ pairs of randomly picked nodes, with better results   highlighted in the table; $T_P$ and $T$ represent the graph learning time (without considering the initial kNN graph construction time) and total runtime of the corresponding graph learning algorithms, respectively. The SF-SGL runtime speedups over SGL are also included in the table. We   observe that the graphs learned by SF-SGL can more accurately preserve the spectral properties of the original graph, while the runtime is much smaller than SGL. For example, SF-SGL successfully generate the learned graph for the ``mat3" graph, while SGL failed   due to the limited memory resources.

{{
\begin{figure}
\centering
  \includegraphics[width=0.790596\linewidth]{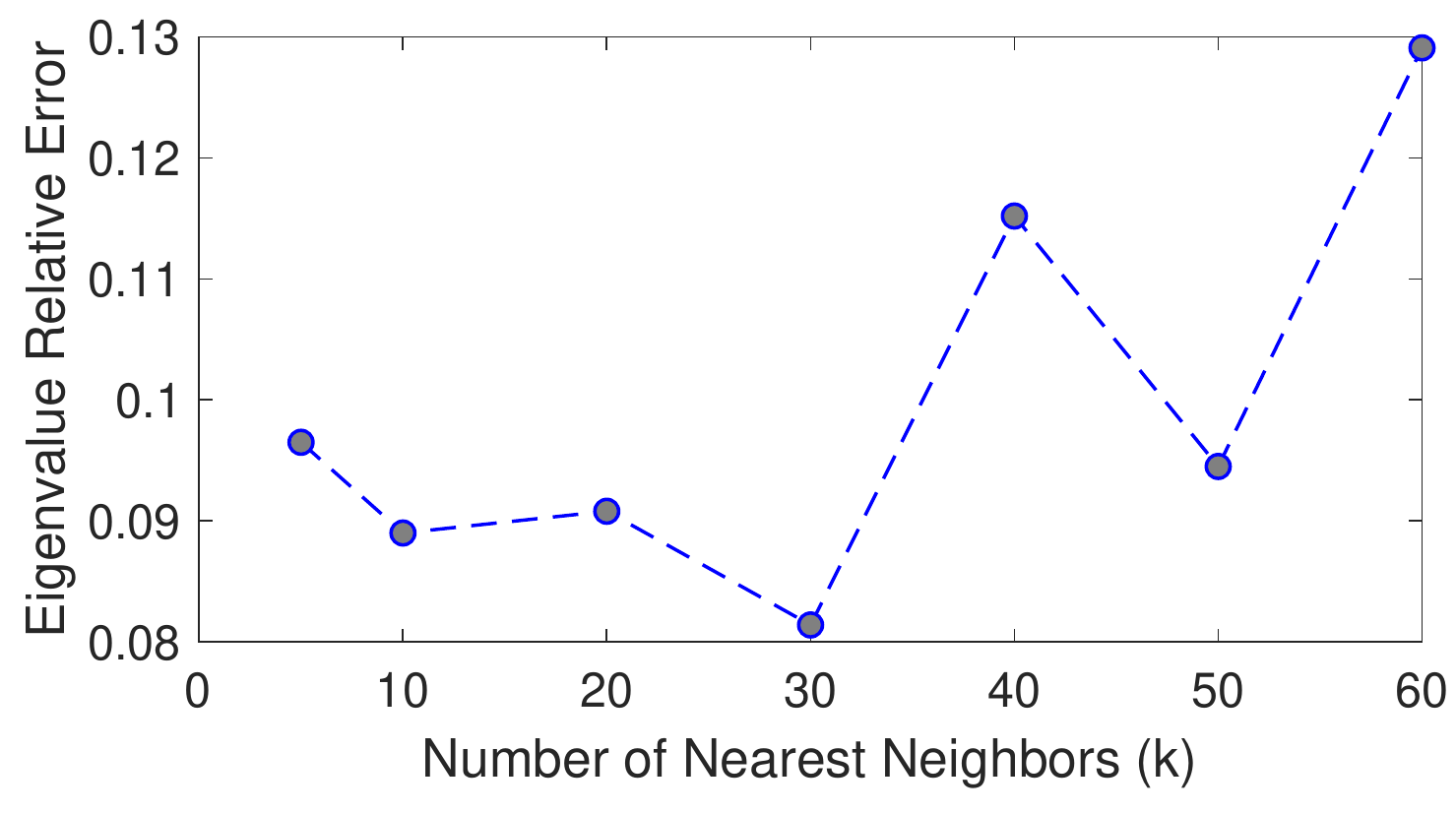}
  {\caption{The average relative errors of the first 50 eigenvalues between the learned graph and the original graph with different $k$ (the ``fe$\_$4elt" graph).}
  \label{fig:eigerr_sfsgl}}
\end{figure}

\begin{figure}
\centering
  \includegraphics[width=0.790596\linewidth]{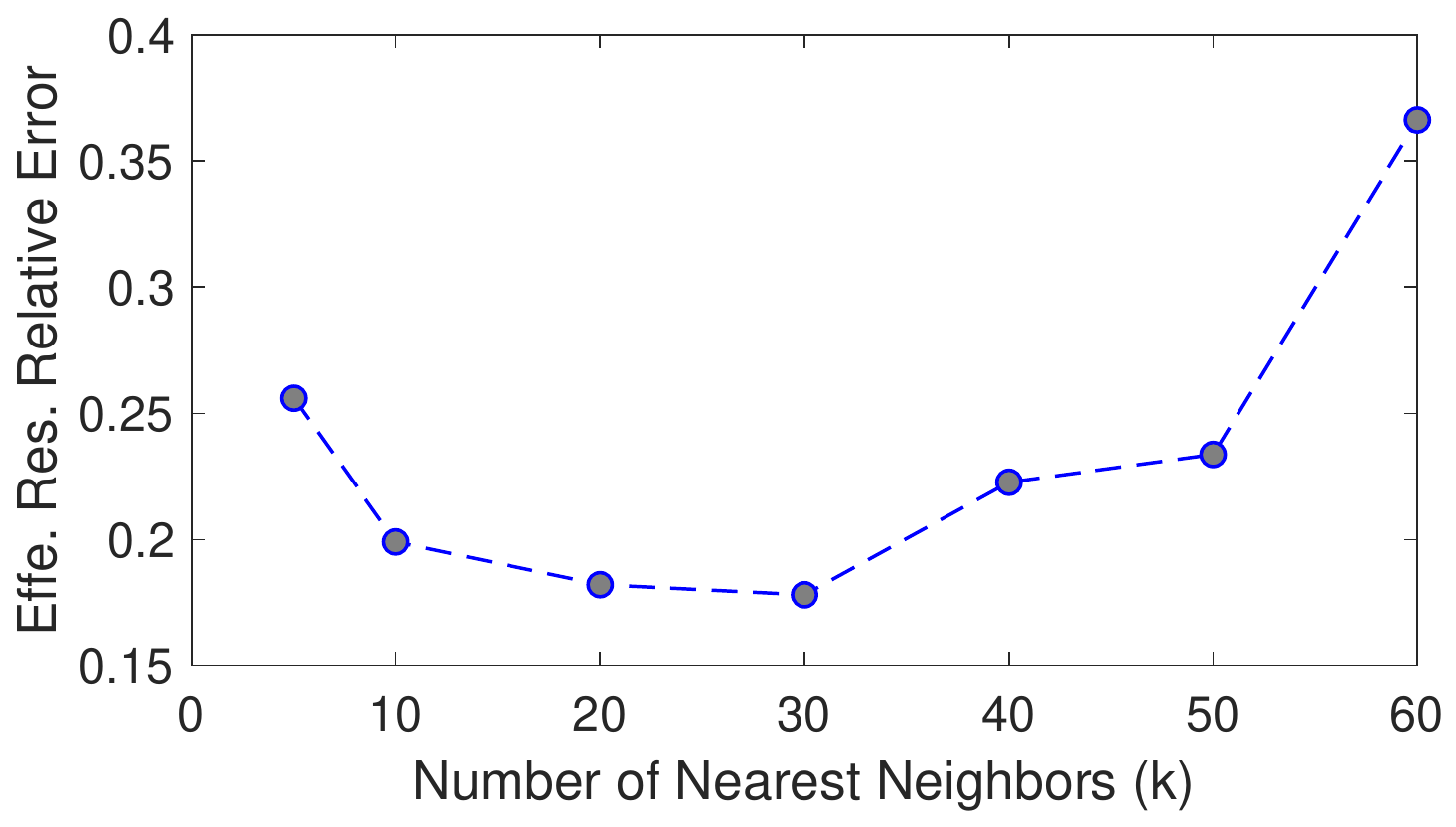}
 {\caption{The average relative errors of the effective resistances (100 pairs) between the learned graph and the original graph with different $k$ (the ``fe$\_$4elt" graph).}
  \label{fig:effeerr_sfsgl}}
\end{figure}
\textbf{Learning with different kNN graphs.}  Given the original ``fe$\_$4elt" graph and the learned graph generated by SF-SGL with a graph density as 1.05, Figure \ref{fig:eigerr_sfsgl} shows the average relative errors of the first $50$ eigenvalues between learned graph and original graph with various number of the nearest neighbors $k$. While Figure \ref{fig:effeerr_sfsgl} shows the average relative errors of the effective resistance values for $100$ randomly picked node pairs  on the learned graph and the original graph with various number of the nearest neighbors $k$. It can be observed that the learned graphs can well preserve the spectral properties of the original graph with various number of the nearest neighbors for initial kNN graphs. 
}}

\begin{figure}
\centering
  \includegraphics[width=0.799596\linewidth]{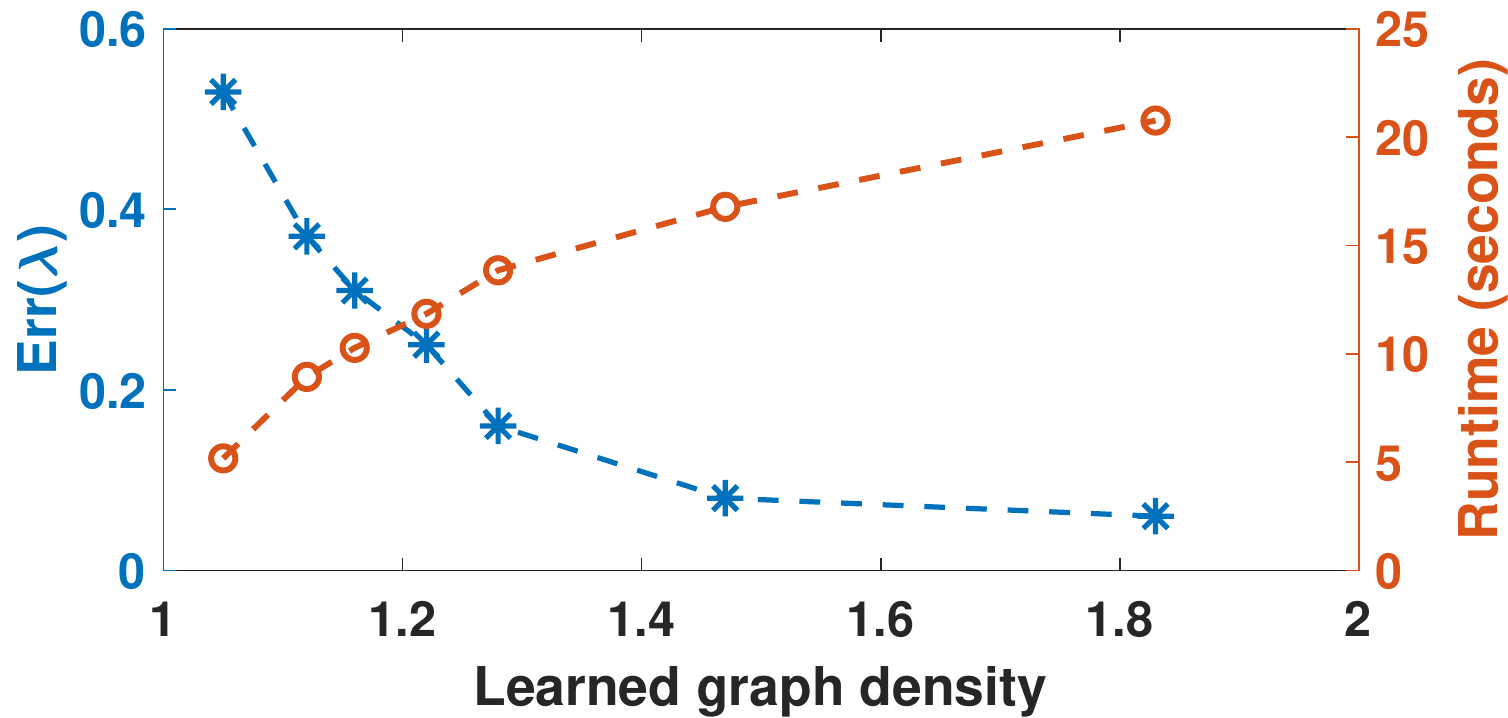}
  \caption{Learned graphs with various graph densities (the ``mat1" graph).}
  \label{fig:mat1}
\end{figure}

\textbf{Trade-off analysis.} Figure \ref{fig:mat1} shows the tradeoffs between the  eigenvalue average relative errors/runtime   and   graph densities learned by SF-SGL for the ``mat1" graph. It can be shown that although higher graph density will result in higher runtime, it will achieve better spectrum preservation of the original graph.

\begin{figure}
  \centering
  \includegraphics[width=0.7995995\linewidth]{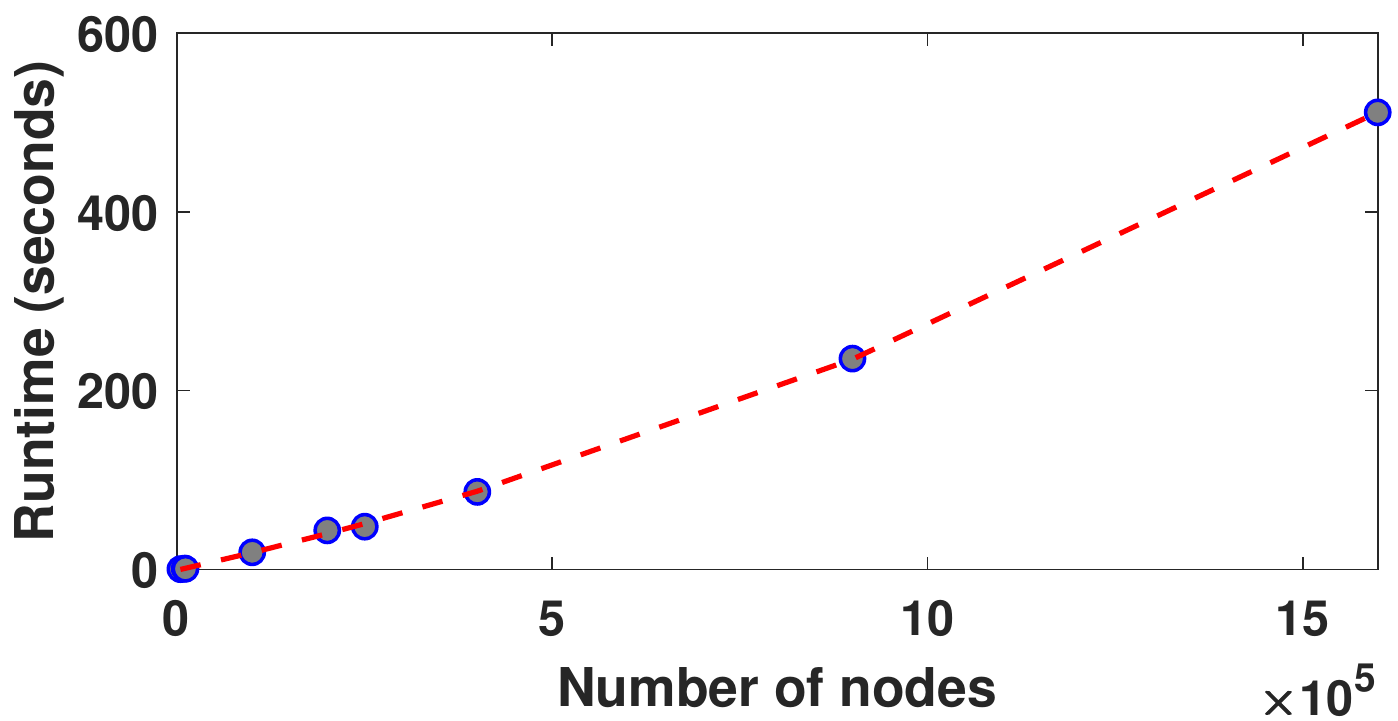}
  \caption{The runtime scalability of the SF-SGL algorithm.}
  \label{fig:runtime}
\end{figure}

\textbf{Runtime scalability.} Figure \ref{fig:runtime} shows the nearly-linear runtime scalability of the proposed SF-SGL algorithm  even for  very large graphs.

\subsection{Data-Driven Vectorless  Integrity Verification}\label{subsection:application}

\begin{table*}
 \centering \caption{Application of SF-SGL in Vectorless Integrity Verification Tasks. }
 \begin{tabular}{|c|c|c|c|c|c|c|c|c|c|c|c|c|}
  \hline
 \multirow{2}{*}{Test cases} & {\multirow{2}{*}{$|V|$}} & {\multirow{2}{*}{$|E|$}} &\multicolumn{4}{|c|}{Original graph} & \multicolumn{6}{|c|}{Learned graph}\\
\cline{4-13}
 { } & { } & { } & {$\tau_G$} & {$T(chol)$}& {$T(sol)$} & {$T(lp)$} & {$\tau_P$} & {$T$} & {$T(chol)$}& {$T(sol)$} & {$T(lp)$} & {$Err$}\\
 \hline  

 {ibm4} & {$9.5E5$} & {$1.6E6$} & {$1.63$} & {$11.86s$} & {$52.82s$} & {$171.60s$} & {$1.02$} & {$237.94s$} & {$1.19s$} & {$3.00s$} & {$162.57s$} & {$0.01\%$} \\
   \hline 
 {thu1} & {$5.0E6$} & {$8.2E6$} & {$1.66$} & {$134.91s$} & {$618.52s$} & {$1,469.92s$} & {$1.03$} & {$1,338.41s$} & {$7.70s$} & {$16.03s$} & {$1,025.85s$} & {$0.01\%$} \\
  \hline 
 {mat1} & {$1.0E5$} & {$2.8E5$} & {$2.88$} & {$3.75s$} & {$15.66s$} & {$14.07s$} & {$1.09$} & {$19.09s$} & {$0.25s$} & {$0.56s$} & {$12.62s$} & {$0.84\%$} \\
  \hline  
{mat2} & {$2.0E5$} & {$5.8E5$} & {$2.93$} & {$16.60s$} & {$56.77s$} & {$23.82s$} & {$1.07$} & {$43.63s$} & {$0.59s$} & {$1.15s$} & {$23.07s$} & {$0.99\%$} \\
  \hline 
{mat3} & {$4.0E5$} & {$1.2E6$} & {$2.89$} & {$27.29s$} & {$89.04s$} & {$69.21s$} & {$1.07$} & {$90.19s$} & {$1.21s$} & {$2.15s$} & {$79.90s$} & {$0.45\%$} \\
  \hline 
{mat4} & {$9.0E5$} & {$2.6E6$} & {$2.89$} & {$169.40s$} & {$333.96s$} & {$531.38s$} & {$1.04$} & {$235.79s$} & {$3.28s$} & {$6.91s$} & {$196.98s$} & {$0.10\%$} \\
  \hline 
{mat5} & {$1.6E6$} & {$4.6E6$} & {$2.90$} & {$1,031.42 s$} & {$38,062.90s$} & {$1623.62s$} & {$1.04$} & {$511.16s$} & {$9.01s$} & {$9.99s$} & {$379.10s$} & {$0.46\%$} \\

  \hline\end{tabular}\label{table:sf_ver}
 \end{table*}
 The integrity of power distribution networks must be verified throughout the design process to ensure that the supply voltage fluctuations are within certain thresholds. To achieve the desired levels of chip reliability and functionality, compute-intensive full-chip thermal analysis and integrity verifications are indispensable, which typically involves estimating thermal profiles under a variety of workloads and power budgets. In this work, we introduce {a  data-driven vectorless power/thermal integrity verification framework}: \textbf{(1)} given a collection   of  voltage/temperature measurements that can be potentially obtained from on-chip voltage/temperature sensors \cite{anik2020chip,ku2019voltage}, the proposed data-driven method will first construct a sparse power/thermal grid network  leveraging the proposed graph topology learning approach; \textbf{(2)} next, vectorless power/thermal integrity verification framework will be exploited  for estimating the worst-case voltage/temperature (gradient) distributions \cite{zhao2020spectral,zhao2017spectral}.

  Table \ref{table:sf_ver} shows the single-level vectorless integrity verification results between learned graphs and original graphs, where ``ibm4" and ``thu1" are power grid test cases, ``mat1" to ``mat5" are 3D thermal grids; $T$, {$T(chol)$}, {$T(sol)$} and  {$T(lp)$} represent the  runtime for SF-SGL, Cholesky factorization, adjoint sensitivity calculation using matrix factors and the linear programming (LP) computation, respectively. {$T(sol)$} and  {$T(lp)$}  are runtime results computed by summing up the runtime for verifying $100$ randomly chosen nodes. $Err$ denotes the average relative solution error compared to the ones obtained with the original graph. Compared with the verification results on original graphs,  the SF-SGL learned graphs  have achieved the very similar results with much lower overall runtime.

\begin{figure}
\centering
  \includegraphics[width=\linewidth]{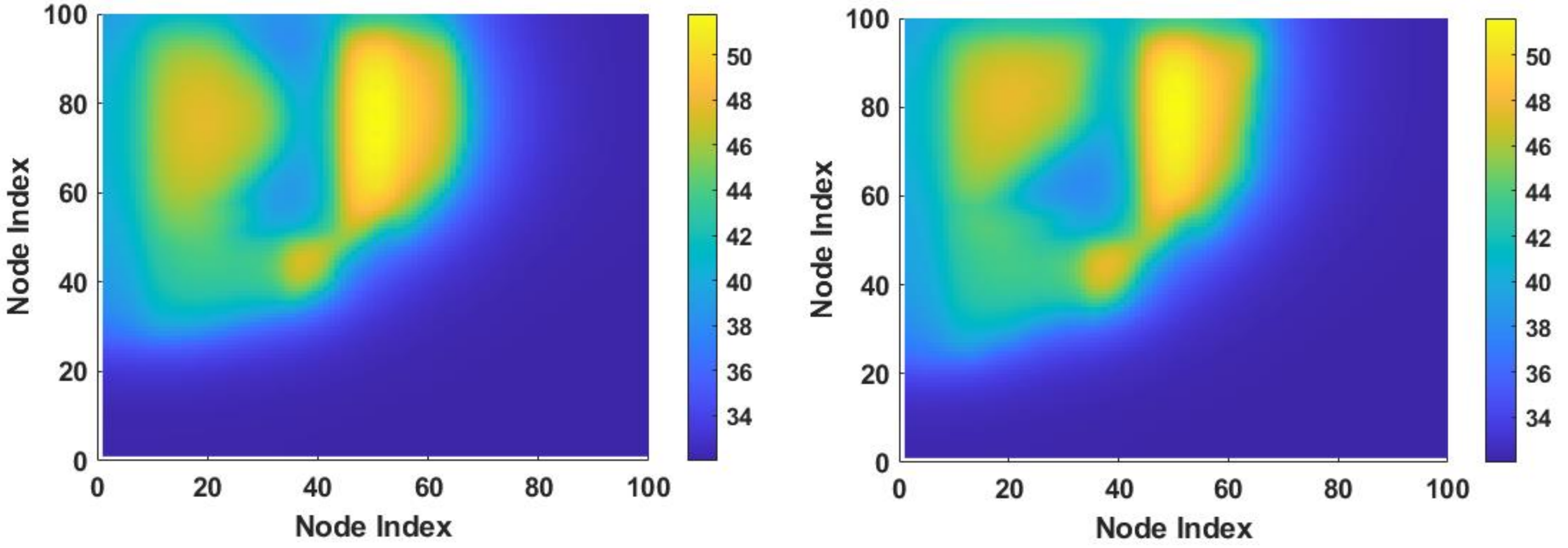}
  \caption{Worst-case temperature distributions obtained by vectorless verification with the original thermal grid (left) and the learned  grid (right).}
  \label{fig:thermal}
\end{figure}

Figure \ref{fig:thermal} shows the worst-case temperature distributions obtained by vectorless verification using the original thermal grid (left) and the learned  grid (right) for the ``mat1" graph. It can be observed that very similar worst-case thermal profiles can be obtained using the SF-SGL learned graph.
   
  
\section{Conclusions}\label{conclusion}
 This work proposes  highly-scalable spectral algorithms for learning large resistor networks from linear voltage and current measurements.  
  We show that the proposed graph learning approach is equivalent to solving the classical graphical Lasso problems with Laplacian-like precision matrices. A    unique feature of the proposed methods is that the learned graphs will have the  {spectral embedding or effective-resistance  distances  to encode the similarities} between the original input data points (voltage measurements). 
As an important extension, we also introduce a more scalable solver-free  spectral graph learning (SF-SGL) algorithm. Such a scalable framework allows learning a hierarchy of spectrally-reduced and sparsified graphs in nearly-linear time, which can become key to accelerating many graph-based numerical computing tasks. The proposed spectral approach is simple to implement and inherently parallel friendly. The proposed spectral algorithms allow  each iteration to be completed in $O(N \log N)$ time, whereas existing state-of-the-art methods   require at least $O(N^2)$ time for each iteration. We also provide  a sample complexity analysis to  show that  it is possible to   accurately recover a resistor network with only $O(\log N)$ voltage measurements (vectors). 
 Our extensive experimental results show that the proposed method can produce a hierarchy of high-quality learned graphs in nearly-linear time  for a variety of real-world, large-scale graphs and circuit networks when compared with prior state-of-the-art spectral methods.
  \vspace{-0pt}



\section{Acknowledgments}
This work is supported in part by  the National Science Foundation under Grants    CCF-2021309,  CCF-2011412, CCF-2212370, and CCF-2205572.

 \bibliographystyle{unsrt}
{
\bibliography{zhang, feng, graphzoom, graspel,zhao}

}

\end{document}